\newcommand*{\ie}{{\it i.e.},\@\xspace} %
\newcommand{\g}{g \cdot}
\newcommand{\ginv}{g^{-1} \cdot}
\theoremstyle{plain}
\newtheorem{theorem}{Theorem}[section]
\theoremstyle{definition}
\theoremstyle{remark}
\icmltitlerunning{FlowMM: Generating Materials with Riemannian Flow Matching}
\begin{document}

\twocolumn[
\icmltitle{FlowMM: Generating Materials with Riemannian Flow Matching}

\icmlsetsymbol{equal}{*}

\begin{icmlauthorlist}
\icmlauthor{Benjamin Kurt Miller}{ams,meta}
\icmlauthor{Ricky T. Q. Chen}{meta}
\icmlauthor{Anuroop Sriram}{meta}
\icmlauthor{Brandon M. Wood}{meta}
\end{icmlauthorlist}

\icmlaffiliation{ams}{University of Amsterdam}
\icmlaffiliation{meta}{FAIR, Meta AI}

\icmlcorrespondingauthor{BKM}{b.k.miller@uva.nl}
\icmlcorrespondingauthor{BW}{bmwood@meta.com}

\icmlkeywords{Machine Learning, ICML}

\vskip 0.3in
]

\printAffiliationsAndNotice{Research done while BKM was an intern at FAIR, Meta AI.}  %

\begin{abstract}
Crystalline materials are a fundamental component in next-generation technologies, yet modeling their distribution presents unique computational challenges. Of the plausible arrangements of atoms in a periodic lattice only a vanishingly small percentage are thermodynamically stable, which is a key indicator of the materials that can be experimentally realized. Two fundamental tasks in this area are to (a) predict the stable crystal structure of a known composition of elements and (b) propose novel compositions along with their stable structures. We present FlowMM, a pair of generative models that achieve state-of-the-art performance on both tasks while being more efficient and more flexible than competing methods. We generalize Riemannian Flow Matching to suit the symmetries inherent to crystals: translation, rotation, permutation, and periodic boundary conditions. Our framework enables the freedom to choose the flow base distributions, drastically simplifying the problem of learning crystal structures compared with diffusion models. In addition to standard benchmarks, we validate FlowMM's generated structures with quantum chemistry calculations, demonstrating that it is $\sim$3x more efficient, in terms of integration steps, at finding stable materials compared to previous open methods.
\end{abstract}

\begin{figure}[th]
    \vskip 0.0in
    \begin{center}
    \centerline{\includegraphics[width=\columnwidth]{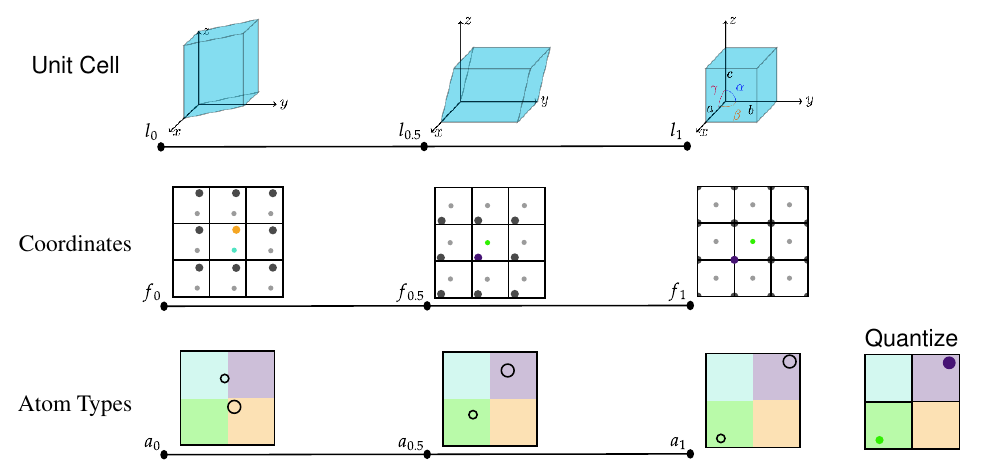}}
    \caption{%
        A conceptual representation of the evolution from base distribution to target distribution, according the vector field learned by our model. 
        We model a joint distribution over lattice parameters, periodic fractional coordinates, and a binary representation of atom type. 
        The highlights include symmetry-aware geodesic paths and a base distribution that directly produces plausible lattices. Note that coordinates and atom types are merely depicted in 2d for clarity.
    }
    \label{fig:conceptual}
    \end{center}
    \vskip -0.3in
\end{figure}

\vspace{-1em}

\section{Introduction}

Materials discovery has played a critical role in key technological developments across history \cite{appl1982haber}. The huge number of plausible materials offers the potential to advance key areas such as energy storage \cite{ling2022review}, carbon capture \cite{hu2023machine}, and microprocessing \cite{choubisa2023interpretable}. With the promise comes a challenge: 
The search space of crystal structures is combinatorial in the number of atoms and elements under consideration, resulting in an intractable number of potential structures. Of these, only a vanishing small percentage will be experimentally realizable.
These factors have motivated computational exploration of the materials design space, which has accelerated discovery campaigns, but typically relies on random structure search \cite{potyrailo2011combinatorial} and expensive quantum mechanical computations. Generative modeling has shown promise for navigating large design spaces for scientific research, but the application to materials is still relatively novel.

The specific task we will focus on in this paper is the generation of periodic crystals. We're interested in two cases: \emph{Crystal Structure Prediction} (CSP), which we define as the setting where the user provides the number of constituent atoms and their elemental types, i.e. their \emph{composition}, and \emph{De Novo Generation} (DNG) when the generative model is tasked to produce the composition alongside its crystal structure. 
An additional complexity is that we want to generate crystals that are \emph{stable}. Naively, stability is determined by a thermodynamic competition between a structure and competing alternatives. The known stable structures define a \emph{convex hull} of stable compositions over the energy landscape. 
We use the heuristic that the stability of a material is determined by its energetic distance to the convex hull, denoted $E_{hull}$. 
Structures with $E_{hull} \leq 0.0$ eV/atom are considered stable. 
Otherwise, the structure is unfavorable and will decompose into a stable neighboring composition.

Modeling crystals is challenging because it requires fitting distributions jointly over several variables, each with different and interdependent structure. 
The atom types are discrete, while the \emph{unit cell}, i.e. the repeating fundamental volume element, and the atomic positions are continuous.
Side lengths of the unit cell are strictly positive, the angles are bounded, and the atomic coordinates are periodic. 
The number of atoms varies, but the unit cell dimensionality is fixed.

While diffusion models found some success in generating stable materials they have fundamental limitations, making them suboptimal for the problem. 
In graph neural network implementations, each variable has required a different diffusion framework to fit its structure \cite{jiao2023crystal, zeni2023mattergen}. 
Atomic coordinates are modeled using score matching \cite{song2020score}, enabling a wrapped normal transition distribution and a uniform asymptotic distribution,
atomic types utilize discrete diffusion \cite{austin2021structured}, and the unit cell is modeled with denoising diffusion \cite{sohl2015deep, ho2020denoising}.
Both of these approaches must choose a complex representation for the lattice in order to fit within the diffusion framework since their limiting distribution must be normal, yet still represent a ``realistic material'' while doing diffusion. Furthermore \citet{jiao2023crystal}, must perform an ad hoc Monte Carlo estimate of a time evolution scaling factor (top of page 6 in their paper), and they do not simulate the forward trajectory in training. This simulation is a slow, but necessary step, in rigorous diffusion models on manifolds \cite{huang2022riemannian}.

\paragraph{Our contribution}
We introduce \emph{FlowMM}, a pair of generative models for CSP and DNG that each jointly estimate symmetric distributions over fractional atomic coordinates and the unit cell (along with atomic types for DNG) in a single framework based on Riemannian Flow Matching \cite{lipman2022flow, chen2023riemannian}. We train a Continuous Normalizing Flow \cite{chen2018neural} with a finite time evolution and produce high-quality samples, as measured by standard metrics and thermodynamic stability, with significantly fewer integration steps than diffusion models.

First, we generalize the Riemannian flow matching framework to estimate a point cloud density that is invariant to translation with periodic boundary conditions, a novel achievement for continuous normalizing flows, by proposing a new objective in Section~\ref{sec:rfm4mat}. With this step, it becomes possible to enforce isometric invariances inherent to the geometry of crystals as an inductive bias in the generative model. Second, after selecting a rotation invariant representation, we choose a natural base distribution that samples plausible unit cells by design. We find that this drastically simplifies fitting the lattice compared with diffusion models, which are forced to take an unnatural base distribution due to inherent limitations in their framework. Third, for DNG, we choose a binary representation \cite{chen2022analog} for the atom types that drastically reduces the dimensionality compared with the simplex (one-hot). Our representation is $\lceil \log_2(100) \rceil = 7$ dimensions per atom, while the simplex requires 100 dimensions per atom. (Note that $\lceil \cdot \rceil$ denotes the ceiling function.) We attribute the accuracy of FlowMM in predicting the number of unique elements per unit cell to this design choice, something other models struggle with. Finally, we compare our method to diffusion model baselines with extensive experiments on two realistic datasets and two simplified unit tests. In addition to competitive or state-of-the-art performance on standard metrics, we take the additional step to estimate thermodynamic stability of generated structures by performing expensive quantum chemistry calculations. We find that FlowMM is able to generate materials with comparable stability to these other methods, while being \emph{significantly} faster at inference time.

We made our code publicly available on \texttt{GitHub}\footnote{\href{https://github.com/facebookresearch/flowmm}{https://github.com/facebookresearch/flowmm}}.

\paragraph{Related work} 
The earliest approaches for both CSP and DNG new materials rely on proposing a large number of possible candidate materials and screening them with high-throughput quantum mechanical \cite{kohn1965self} calculations to estimate the energy and find stable materials. Those candidate materials are proposed using simple replacement rules \cite{wang2021predicting} or accelerated by genetic algorithms \cite{glass2006uspex, pickard2011ab}. This search can be accelerated using machine learned models to predict energy \cite{schmidt2022large, merchant2023scaling}.

Various generative models have been designed to accelerate material discovery by directly generating materials, avoiding expensive brute force search \cite{court20203, yang2021crystal, nouira2018crystalgan}. Diffusion models have been widely used for this task. Initially without diffusing the lattice but predicting it with a variational autoencoder \cite{xie2021crystal}; later by jointly diffusing the positions, lattice structure and atom types ~\cite{jiao2023crystal,yang2023scalable,zeni2023mattergen}. We only compare to open models with verifiable results at the time of submission of this paper to ICML 2024 in January. More recently, other models have used space groups as additional inductive bias~\cite{ai4science2023crystal, jiao2024space, cao2024space}. Other approaches include using large language models~\cite{flam2023language, gruver2024fine}, and with normalizing flows \citet{wirnsberger2022normalizing}.

\section{Preliminaries}
\label{sec:preliminaries}

We are concerned with fitting probability distributions over crystal lattices, which are collections of atoms periodically arranged to fill space in a repeated pattern. One way to construct a three-dimensional crystal is by tiling a parallelepiped, or \emph{unit cell}, such that the entirety of space is covered. The unit cell contains an arrangement of atoms, i.e., a labeled point cloud, which produces the crystal lattice under the tiling. The following is the exposition towards our generative model. As background, we recommend a primer on the symmetries of crystals \citep{adams2023representing}.

\subsection{Representing crystals and their symmetries}
\label{sec:representing_crystals}

\paragraph{Representation of a crystal}

We label the particles in a crystal $\cboldt \in \Tilde{\Ccal}$ with $n \in \Nbb$ sites (atoms) by column in matrix 
$\abold \coloneqq \left[a^{1}, \ldots, a^{n}\right] \in \Acal
$
where each atomic number maps to a unique $h$-dimensional categorical vector. The unit cell is a matrix of Cartesian column vectors $\lboldt \coloneqq \left[\Tilde{l}^{1}, \Tilde{l}^{2}, \Tilde{l}^{3}\right] \in \Tilde{\Lcal} = \Rbb^{3 \times 3}$. The position of each particle can be represented using a matrix $\xbold \coloneqq \left[x^{1}, \ldots, x^{n}\right] \in \Xcal=\Rbb^{3 \times n}$ with Cartesian coordinates in the rows and particles in the columns, but we will choose an alternative fractional representation that leverages the periodic boundary of $\lboldt$. Positions are equivalently $\xbold = \lboldt \fbold$ where $\fbold \coloneqq \lboldt^{-1} \xbold = \left[f^{1}, \ldots, f^{n}\right] \in \Fcal = [0,1)^{3 \times n}$. The volume of a unit cell $\Vol(\lboldt) \coloneqq \lvert \det \lboldt \rvert$ must be nonzero, implying that $\lboldt$ is invertible.
Then, the crystal is
\begin{align*}
    \left\{
    \left(\abold', \fbold' \right)
    \mmid 
    \abold' = \abold,
    \fbold' = \fbold + \lboldt k \boldsymbol{1}, \forall k \in \Zbb^{3 \times 1}
    \right\}
\end{align*}
where elements of $k$ denote integer translations of the lattice and $\boldsymbol{1}$ is a $1 \times n$ matrix of ones to emulate ``broadcasting.''

This representation is not unique since there is freedom to choose an alternative unit cell and corresponding fractional coordinates producing the same crystal lattice, e.g. by doubling the volume or skewing the unit cell. Among minimum volume unit cells, we choose $\lboldt$ to be the unique one determined by Niggli reduction \cite{grosse2004numerically}.

\paragraph{Equivariance}
A function $f \colon \Scal \to \Scal'$ is \emph{$G$-equivariant} when for any $s \in \Scal$ and any $g \in G$ we have $f(g \cdot s) = g \cdot f(s)$ where $\cdot$ indicates group action in the relevant space. \emph{$G$-invariant} functions have instead $f(g \cdot s) = f(s)$. We will apply graph neural networks \cite{satorras2021n} with these properties \cite{thomas2018tensor, kondor2018generalization, miller2020relevance, weiler2021coordinate, geiger2022e3nn, liao2023equiformerv2, passaro2023reducing}.

\paragraph{Invariant density}
A density $p \colon \Scal \to \Rbb^{+}$ is \emph{$G$-invariant} when $p$ is $G$-invariant. When a $G$-invariant density $p$ is transformed by an invertible, $G$-equivariant function $f$ the resulting density is $G$-invariant \cite{kohler2020equivariant}.

\paragraph{Symmetries of crystals}
We will now introduce relevant symmetry groups and their action on our crystal representation $\cboldt \coloneqq (\abold, \fbold, \lboldt) \in \Acal \times \Fcal \times \Tilde{\Lcal} \eqqcolon \Tilde{\Ccal}$. The symmetric group $S_n$ on $n$ atoms has $n!$ elements corresponding to all permutations of atomic index. The element $\sigma \in S_n$ acts on a crystal by $\sigma \cdot \cboldt = \left(\left[ a^{\sigma(1)}, \ldots, a^{\sigma(n)} \right], \left[f^{\sigma(1)}, \ldots, f^{\sigma(n)}\right], \lboldt \right)$. The special Euclidean group $\SE(3)$ consists of orientation preserving rigid rotations and translations in Euclidean space. The elements $(Q, \tau) \in \SE(3)$ can be decomposed into rotation $Q \in \SO(3)$, represented by a $3 \times 3$ rotation matrix, and translation $\tau \in [-\tfrac{1}{2}, \tfrac{1}{2}]^{3 \times 1}$. Considering the periodic unit cell, the action on our crystal representation is ${\tau \cdot \cboldt = (\abold, \fbold + \tau \boldsymbol{1} - \lfloor \fbold + \tau \boldsymbol{1} \rfloor, \lboldt)}$, where $\lfloor \cdot \rfloor$ denotes the element wise floor function, i.e., $\fbold$ coordinates ``wrap around.'' The rotation action is $Q \cdot \cbold \coloneqq (\abold, \fbold, Q\lboldt)$.

The true distribution $q(\cboldt)$ has invariances to group operations that we would like our estimated density to inherit:
\begin{alignat}{2}
    q(\sigma \cdot \cboldt) &= q(\sigma \cdot \abold, \sigma \cdot \fbold, \lboldt) = q(\cboldt), &\quad& \forall \sigma \in S_{n} \label{eqn:permutation_invariance} \\
    q(\tau \cdot \cboldt) &= q(\abold, \fbold + \tau \boldsymbol{1} - \lfloor \fbold + \tau \boldsymbol{1} \rfloor, \lboldt)  \nonumber \\ 
    &= q(\cboldt), && \hspace{-2em} \forall \tau \in [-\tfrac{1}{2}, \tfrac{1}{2}]^{3} \label{eqn:translation_invariance} \\
    q(Q \cdot \cboldt) &= q(\abold, \fbold, Q \lboldt) = q(\cboldt), & Q &\in \SO(3) \label{eqn:rotation_invariance}
\end{alignat}
permutation, translation, and rotation invariance, respectively. We address \eqref{eqn:permutation_invariance} and \eqref{eqn:translation_invariance} in Section~\ref{sec:rfm4mat}, but \eqref{eqn:rotation_invariance} here. 

Probability distributions over a $G$-invariant representation are necessarily $G$-invariant. Lattice parameters $\lbold \in \Lcal$ are a rotation invariant representation of the unit cell as a 6-tuple of three side lengths $a, b, c \in \Rbb^{+}$ with units of \ang and three internal angles $\alpha, \beta, \gamma \in [60^\circ, 120^\circ]$ in degrees. (This range for the lattice angles is due to the Niggli reduction.) We therefore propose a rotation invariant alternative crystal representation $\cbold \in \Ccal \coloneqq \Acal \times \Fcal \times \Lcal$ and thereby any distribution $p(\cbold)$ is rotation invariant:
\begin{align*}
    Q \cdot \cbold &= (\abold, \fbold, Q \cdot \lbold) = (\abold, \fbold, \lbold) = \cbold, &   p(Q \cdot \cbold) = q(\cbold).
\end{align*}
$\lbold$ carries all non-orientation information about the unit cell. By composing functions $\Ufrak \colon \lboldt \to \lbold$ and $\Ufrak^{\dagger} \colon \lbold \to \lboldt$, implemented by \citet{ong2013python}, we can reconstruct $\lboldt$ from $\lbold$ up to rotation, i.e. $\Ufrak^{\dagger}( \Ufrak(\lboldt) ) = Q \lboldt$ for some $Q \in \SO(3)$.
Further symmetry information is in Appendix~\ref{appendix:preliminaries}.

\paragraph{Representing atomic types}
The representation of $\abold \in \Acal$ depends on whether the generative model is doing CSP or DNG. For CSP, the atomic types are only conditional information and may be considered a tuple of $n$, $h$-dimensional one-hot vectors. For DNG, the generative model treats $\abold$ as a random variable and learns a distribution over its representation. We choose to apply the binarization method proposed by \citet{chen2022analog} where categorical vector $\abold$ is mapped to a $\{-1, 1\}$-bit representation of length $\lceil \log_2 h \rceil$. The flow then learns to transform a $\lceil \log_2 h \rceil$-dimensional normal distribution to the corresponding bit representation and, at inference time, is discretized by the $\sign \colon \Rbb \to \{-1, 1\}$ function. When $\lceil \log_2 h \rceil \neq \log_2 h$, we end up with ``unused bits'', i.e. we can represent more than $h$ classes. We find that the model is able to learn to ignore these extra atom types in practice. Note that \citet{chen2022analog} suggest using a self-conditioning scheme, but we did not find it necessary.

\subsection{Learning distributions with Flow Matching}
\label{sec:flow_matching}

\paragraph{Riemannian Manifolds (Abridged)}
In order to learn probability distributions over spaces that ``wrap around,'' we must introduce smooth, connected Riemannian manifolds $\Mcal$. They relax the notion of a global coordinate system in lieu of a collection of local coordinate systems that transition seamlessly between one another. Every point $m \in \Mcal$ has an associated \emph{tangent space} $\Tcal_{m} \Mcal$ with an inner product $\lA u, v \rA$ for $u, v \in \Tcal_{m} \Mcal$, enabling the definition of distances, volumes, angles, and minimum length curves (\emph{geodesics}). A fundamental building block for our generative model is the space of time-dependent, smooth vector fields $\Ucal$ living on the manifold. Elements $u_t \in \Ucal$ are maps $u_t: [0, 1] \times \Mcal \to \Tcal \Mcal$ where the first argument $t$ denotes time and $\Tcal \Mcal \coloneqq \cup_{m\in\Mcal} \{m\} \times \Tcal_{m} \Mcal$ denotes the \emph{tangent bundle}, i.e., the disjoint union of each tangent space at every point on the manifold. We learn distributions by estimating functions in $\Ucal$.

The geodesics for any $\Mcal$ that we consider can be written in closed form using the exponential and logarithmic maps. The geodesic connecting $m_0, m_1 \in \Mcal$ at time $t\in\lB0, 1\rB$ is
\begin{equation}
\label{eqn:geodesic}
    m_t \coloneqq \exp_{m_0}( t \log_{m_0}(m_1)),
\end{equation}
where $\exp_\square$ and $\log_\square$ depend on the manifold $\Mcal$.

\paragraph{Probability paths on flat manifolds}
Probability densities on $\Mcal$ \footnote{We consider flat tori and Euclidean space, restricting ourselves to manifolds with flat metrics, i.e. the metric is the identity matrix.} are continuous functions $p \colon \Mcal \to \Rbb^{+}$ where $\int_\Mcal p(m) \, dm = 1$ and $p \in \Pcal$, the space of probability densities on $\Mcal$. A \emph{probability path} $p_t: [0, 1] \to \Pcal$ is a continuous time-dependent curve in probability space linking two densities $p_0, p_1 \in \Pcal$ at endpoints $t=0$, $t=1$. A \emph{flow} $\psi_t \colon [0, 1] \times \Mcal \to \Mcal$ is a time-dependent diffeomorphism defined to be the solution to the ordinary differential equation: $\frac{d}{dt} \psi_{t}(m) = u_t(\psi_t(m))$, subject to initial conditions $\psi_0(m) = m_0$ with $u_t \in \Ucal$. A flow $\psi_t$ is said to generate $p_t$ if it pushes $p_0$ forward to $p_1$ following the time-dependent vector field $u_t$. The path is denoted $p_t = [\psi_t]_{\#} p_0 \coloneqq p_0(\psi_t^{-1}(m)) \det \left\lvert \frac{\partial \psi_t^{-1}(m)}{\partial m} (m) \right\rvert$ for our choice of flat $\Mcal$ \cite{mathieu2020riemannian, gemici2016normalizing, falorsi2020neural}. \citet{chen2018neural} proposed to model $\psi_t$ implicitly by parameterizing $u_t$ to produce $p_t$ in a method known as a \emph{Continuous Normalizing Flow} (CNF).

\paragraph{Flow Matching}
Fitting a CNF using maximum likelihood, as in the style of \cite{chen2018neural}, can be expensive and unstable. A more effective alternative, fitting a vector field $v_t^{\theta} \in \Ucal$ with parameters $\theta$, may be accomplished by doing regression on vector fields $u_t$ that are known \emph{a priori} to generate $p_t$. The method is known as \emph{Flow Matching} \cite{lipman2022flow} and was extended to $\Mcal$ by \citet{chen2023riemannian}. \citet{lipman2022flow} note that $u_t$ is generally intractable and formulate an alternative objective based on tractable, conditional vector fields $u_t(m \mid m_1)$ that generate conditional probability paths $p_t(m \mid m_1)$, the push-forward of the conditional flow $\psi_t(m \mid m_1)$, starting at any $p$ and concentrating around $m=m_1 \in \Mcal$ at $t=1$. Marginalizing over target distribution $q$ recovers the unconditional probability path $p_t(m)=\int_\Mcal p_t(m \mid m_1) q(m_1) \, dm_1$ and the unconditional vector field $u_t(m) = \int_\Mcal u_t(m \mid m_1) \frac{p_t(m \mid m_1) q(m_1)}{p_t(m)} \, dm_1$. This construction results in an unconditional path $p_t$ where $p_0 =p$, the chosen base distribution, and $p_1 = q$, the target distribution. Their proposed objective, simplified for flat manifolds $\Mcal$, is:
\begin{align}
\label{eqn:conditional-flow-matching-objective}
    \Lfrak(\theta) = \Ebb_{t, q(m_1), p_t(m \mid m_1)} \lVert v_t^{\theta}(m) - u_t(m \mid m_1) \rVert^2,
\end{align}
where the $\lVert \cdot \rVert$ norm is induced by inner product $\lA \cdot, \cdot \rA$ on $\Tcal_m \Mcal$ and $t \sim \text{Uniform}(0, 1)$. At optimum, $v_t^{\theta}$ generates $p_t^{\theta} = p_t$ with endpoints $p_{0}^{\theta}=p$, $p_{1}^{\theta}=q$. At inference, we sample $p$ and propagate $t$ from 0 to 1 using our estimated $v_t^{\theta}$.

\subsection{Crystalline Solids}
\label{sec:crystalline_solids}

\paragraph{Stability and the convex hull}
One of the most important properties of a material is its stability, a heuristic that gives a strong indication of its synthesizability. A crystal is stable when it is energetically favorable compared with competing \emph{phases}, structures built from the same atomic constituents, but in different proportion or spacial arrangement. The energy can be computed using a first-principles quantum mechanical method called density functional theory, which estimates the energy based on the electronic structure \cite{kohn1965self}. The lowest energy materials form a convex hull over composition. Stable structures lie directly on the convex hull or below it, while meta-stable structures are restricted to $E^{hull} < 0.08$ eV/atom. Note that, this definition has inherent epistemic uncertainty since many materials are unknown and not represented on the convex hull. Our specific convex hull is in reference to the Materials Project database, as recorded by \citet{riebesell2024convexhull} in February 2023. 

\paragraph{Arity for materials}
A material with $N$ unique atom type constituents is known as an \emph{$N$-ary} material and its stability is determined by an $N$-dimensional convex hull. High $N$-ary materials occupy convex hulls that not represented in the realistic datasets under consideration; the curse of dimensionality and chemical complexity limits coverage of these hulls. We posit that an effective generative model of stable structures will produce a distribution over $N$-ary that is close to the data distribution. This is borne out in our experiments as seen in Figure~\ref{fig:nary_dist} and in Appendix~\ref{appendix:results_continued}. On another note, several generative models produced $1$-ary structures marked stable by the $E^{hull}<0$ criterion. This is not possible as the one-element phase diagrams are known and there are no energetically favorable structures to be found. This is a numerical issue; we did not count those structures as stable.

\subsection{Problem statements \& Datasets}
\label{sec:problem_statement_data}

\paragraph{Crystal Structure Prediction (CSP)} 
We aim to predict the stable structure for a given composition $\abold$, but this is not well-posed because some compositions have no stable arrangement. Furthermore, the underlying energy calculations have uncertainty and some structures are degenerate. We therefore formulate CSP as a conditional generative task on metastable structures, distributed like $q(\fbold, \lbold; \abold)$, rather than via regression on stable structures. Our dataset consists of metastable structures, indexed by composition and count:
\begin{equation}
	\label{eqn:crystal-structure-prediction}
	\fbold_{ij}, \lbold_{ij} \in \lC \fbold' \in \Fcal, \lbold' \in \Lcal \mid E(\abold_i, \fbold', \lbold') < E_{\text{m}} \rC.
\end{equation}
$E_{\text{m}} \coloneqq 0.08$ eV/atom is fixed by metastability, $E \colon \Ccal \to \Rbb$ is the single point energy prediction of density functional theory, $\abold_i$ is a composition with index $i$, and $\fbold'_{ij}, \lbold'_{ij}$ are the corresponding metastable structures with index $j$. The maximum values of $i$ and $j$ are the number of metastable compositions and structures for that composition, respectively.

In CSP, we fit a generative model $p(\fbold, \lbold \mid \abold)$ to the samples $\fbold_{ij}, \lbold_{ij} \sim q(\fbold, \lbold; \abold_i)$. Given the right $\abold$, a good generative model should generate corresponding metastable structures. 

\paragraph{De novo generation (DNG)} 
A major goal of materials science is to discover stable and novel crystals. In this effort, we aim to sample directly from a distribution of metastable materials $q(\cbold)$, generating both the structure $\fbold, \lbold$ along with the composition $\abold$. Our distribution must include $\abold$ because it should avoid compositions that have no (meta)stable structure and because many new and interesting materials may have novel compositions. Define our dataset:
\begin{equation}
	\label{eqn:de-novo-generation}
	\abold_k, \fbold_k, \lbold_k \coloneqq \cbold_k \in \lC \cbold' \in \Ccal \mid E(\cbold') < E_{\text{m}} \rC,
\end{equation}
consisting of $\max k$ metastable crystals.

In DNG, we fit a generative model $p(\cbold)$ to samples $\cbold_k \sim q(\cbold)$. A good generative model should generate both novel and known metastable materials. Determining stability and novelty requires further computation and a convex hull.

\paragraph{Practical Considerations \& Data}
We consider two realistic datasets: \emph{MP-20}, containing all materials with a maximum of 20 atoms per unit cell and within 0.08 eV/atom of the convex hull in the Materials Project database from around July 2021 \cite{jain2013materials}, and \emph{MPTS-52}, a challenging dataset containing structures with up to 52 atoms per unit cell and separated into ``time slices'' where the training, validation, and test sets are organized chronologically by earliest published year in literature \cite{baird2024mpts}.

We include two additional datasets \emph{Perov-5} \cite{castelli2012new} and \emph{Carbon-24} \cite{pickard2020cabon} as unit tests. These do not feature crystals near their energy minima. Perov-5 consists of crystals with varying atomic types, but all structures have the same fractional coordinates. Carbon-24 structures take on many arrangements in fractional coordinates, but only consist of one atom type. Stability analysis is not applicable to Perov-5 and Carbon-24, but proxy metrics introduced by \citet{xie2021crystal} are applicable to all datasets.

\paragraph{Standard (proxy) metrics for CSP and DNG}
Although computing the stability for generated materials is ideal, it is extremely expensive and technically challenging. In light of these difficulties, a number of \emph{proxy metrics} have been developed by \citet{xie2021crystal}. The primary advantage of these metrics is their low cost. We benchmark FlowMM and alternatives using specialized metrics for CSP and DNG.

In CSP we compute the \emph{match rate} and the \emph{Root-Mean-Square Error (RMSE)}. They measure the percentage of reconstructions from $q(\fbold, \lbold; \abold)$ that are satisfactorily close to the ground truth structure and the RMSE between coordinates, respectively. In DNG, we compute a \emph{Structural \& Compositional Validity} percentage using heuristics about interatomic distances and charge, respectively. We also compute \emph{Coverage Recall \& Precision} on chemical fingerprints and the \emph{Wasserstein distance} between ground truth and generated material properties, namely atomic density $\rho$ and number of unique elements per unit cell $N_{el}$. Note $N_{el} = N\text{-ary}$. See Appendix~\ref{appendix:preliminaries} for more details.

\paragraph{Stability metrics for DNG}
The ultimate goal of materials discovery is to propose stable, unique, and novel materials efficiently w.r.t. compute. For flow matching and diffusion models, the most expensive inference-time cost is integration steps. We therefore define several metrics to address these factors on a budget of 10,000 generations.

We compute the percent of generated materials that are stable (\emph{Stability Rate}), but that does not address novelty. Following \citet{zeni2023mattergen}, we additionally compute the percentage of \emph{stable, unique, and novel (S.U.N.)} materials (\emph{S.U.N. Rate}). To address cost, we compute the average number of integration steps needed to generate a stable material (\emph{Cost}) and a S.U.N. material (\emph{S.U.N.} Cost). We explain identification of S.U.N. materials in Appendix~\ref{appendix:preliminaries}.

\section{Riemannian Flow Matching for Materials}
\label{sec:rfm4mat}

Our goal is to define a parametric generative model on the Riemannian manifold $\Ccal$ that carries the geometry and invarinces inherent to crystals. We plan to accommodate both CSP and DNG with simple changes to our model and base distribution.

Concretely, we have a set of samples $\abold_1, \fbold_1, \lbold_1 \sim q(\abold, \fbold, \lbold)$ where $q \in \Pcal$ is an unknown probability distribution over $\Ccal$, and we want to implicitly estimate the probability path $p_t: \lB 0, 1 \rB \times \Pcal \to \Pcal$ that transforms our chosen base distribution $p_0 = p \in \Pcal$ to $p_1 = q$. We achieve this by learning a parametric time-dependent vector field $v_t^{\theta}$ with parameters $\theta$ that optimizes \eqref{eqn:conditional-flow-matching-objective}, adapted to $\Ccal$, on samples $\abold_1, \fbold_1, \lbold_1$. Additionally, we enforce that the unconditional probability path be invariant to symmetries $g \in (\sigma, Q, \tau)$ at all times $t$:
\begin{align}
    p_{t}(g \cdot \cbold) &\coloneqq \int_\Ccal p_t(g \cdot \cbold \mid \cbold_1) \, q(\cbold_1) \, d\cbold_1 = p_{t}(\cbold). \label{eqn:unconditional_probability_path_is_invariant}
\end{align}
We explain the necessary building blocks of our method (a) the geometry of $\Ccal$, (b) the base distribution $p$ and its invariances, (c) the conditional vector fields and our objective derived from \eqref{eqn:conditional-flow-matching-objective}, and finally (d) how our construction affirms that the marginal probability path $p_t(\cbold)$ generated by $u_t(\cbold)$ has the invariance properties of $q$ for all $t \in [0,1]$.

\paragraph{Geometry of $\Ccal$}
Recall $\Ccal \coloneqq \Acal \times \Fcal \times \Lcal$ forms a product manifold, implying that the inner product $\lA \cbold, \cbold' \rA$ decomposes with addition, see Appendix~\ref{appendix:preliminaries}. We now consider the geometry of each component of $\Ccal$ individually.

$\Fcal$ is a permutation invariant collection of $n \times 3$-dimensional flat tori. That means it carries the Euclidean inner product locally, but each side is identified with its opposite. This is relevant for the geodesic and explains why paths ``wrap around'' the domain's edges. The identification is implemented by the action of the translation operator defined in Section~\ref{sec:preliminaries}.

The space of lattice parameters subject to the Niggli reduction $\Lcal \coloneqq \Rbb^{+3} \times \lB 60, 120 \rB^{3}$ is Euclidean, but has boundaries. We can safely ignore these boundaries for the lengths in $\Rbb^{+3}$ since (i) the data does not lie on the boundary ($a,b,c > 0$) and (ii) we select a positive base distribution. Meanwhile, the angles $\alpha, \beta, \gamma$ do often lie directly on the boundary, posing a problem as the target $u_t$ is not a \emph{smooth} vector field. We address the issue with a diffeomorphism $\varphi \colon [60, 120] \to \Rbb$ to \emph{unconstrained space}, applied element wise to $\alpha, \beta, \gamma$: 
\begin{align*}
    \text{logit}(\xi) &\coloneqq \log \frac{\xi}{1 - \xi}, & \varphi(\eta) &\coloneqq \text{logit} \lp\frac{\eta - 60}{120}\rp, \\
    \Sfrak(\xi') &= \frac{\exp(\xi')}{1 + \exp(\xi')}, & \varphi^{-1}(\eta') &= 120 \, \Sfrak \lp \eta' \rp + 60,
\end{align*}
where $\Sfrak$ is the sigmoid function. Practically, geodesics and conditional vector field $u_t$ are both represented in unconstrained space. Base samples and (estimated) target samples are transformed into unconstrained space for learning and integration then evaluated in $\lB 60, 120 \rB$, transformed with $\varphi^{-1}$.

The details of $\Acal$ depend on whether our task is CSP, where we estimate $q(\fbold, \lbold; \abold)$ or DNG, where we estimate $q(\abold, \fbold, \lbold)$. In CSP, $\abold$ is given and the geometry is simple: $\Acal$ is a $h$-dimensional one-hot vector. Components of its unconditional vector field $u_t^{\Acal}(\abold)=v_t^{\Acal, \theta}(\abold)=0$ everywhere. Further discussion about $\Acal$ for CSP is unnecessary in this section and thus omitted! When doing DNG, we take $\Acal$ to be a $\lceil \log_2 h \rceil $-dimensional Euclidean space with a flat metric. Here, $\Acal$ has a simple geometry but the interesting part is that it represents atomic types more efficiently than a one-hot vector, in terms of dimension, after discretization with $\sign$.

\paragraph{Base distribution on $\Ccal$}
Our base distribution on $\Ccal$ is a product of distributions on $\Acal$, $\Fcal$, and $\Lcal$, see Appendix~\ref{appendix:preliminaries}. The base distribution $p(\abold)$ takes the same base distribution as \citet{chen2022analog}, namely $p(\abold) = \Ncal(\abold; 0, 1)$ where $\Ncal$ denotes the normal distribution.
Next, we choose the distribution over $\Fcal$ to be $p(\fbold) \coloneqq \text{Uniform}(0, 1)$, which covers the torus with equal density everywhere.
Finally, we decided to leverage the flexibility afforded by Flow Matching in choosing an \emph{informative} base distribution on $\lbold$. Recall, $\lbold$ can be split into three length and three angle parameters. Since length parameters are all positive we set $p(a, b, c) \coloneqq \prod_{\eta \in \{ \alpha, \beta, \gamma \}} \text{LogNormal}(\eta; \text{loc}_{\eta}, \text{scale}_{\eta})$ where $\text{loc}_{\eta}, \text{scale}_{\eta}$ are fit to training data using maximum likelihood. In the constrained space $\lB 60, 120 \rB$, angle parameters $\alpha, \beta, \gamma$ get base distribution $p(\alpha, \beta, \gamma) \coloneqq \text{Uniform}(60, 120)$. Samples can be drawn in unconstrained space by applying $\varphi$ and the density can be computed using the change of variables formula.

The base distributions $p(\abold)$ and $p(\fbold)$ are factorized and have no dependency on index. Therefore, they are permutation invariant.
Next, $p(\fbold)$ is translation invariant since,
\begin{align}
    p(\tau \cdot \fbold) &= \prod_{i = 1, \ldots, n} U(f^{i} + \tau - \lfloor f^{i} + \tau \rfloor; 0, 1) \nonumber \\
    &= \prod_{i = 1, \ldots, n} U(f^{i}; 0, 1) = p(\fbold),
\end{align}
for all translations $\tau \in [-\tfrac{1}{2}, \tfrac{1}{2}]^{3}$.
Our base distribution is $p(\abold, \fbold, \lbold) \coloneqq p(\abold)p(\fbold)p(\lbold)$, so it carries these invariances. It remains to be shown that $\psi_t$ is equivariant to these groups.

\paragraph{Conditional vector fields on $\Ccal$}
Recall from \citet{chen2023riemannian}, the conditional vector field on flat $\Mcal$ is
\begin{align}
\label{eqn:conditional_vector_field}
    u_t(m \mid m_1) = \frac{d \log \kappa(t)}{dt} d(m, m_1) \frac{\nabla_{m} d(m, m_1)}{\rVert \nabla_{m} d(m, m_1) \lVert^2}
\end{align}
where $d: \Mcal \times \Mcal \to \Rbb$ is the geodesic distance \eqref{eqn:geodesic} and $\kappa(t) = 1 - t$ is a linear time scheduler. Both $\Acal$ for DNG and (transformed) $\Lcal$ are Euclidean manifolds with standard norm, recovering the Flow Matching conditional vector field on their respective tangent bundles, which we denote $u_t^{\Mcal'}(m \mid m_1) = \frac{m_1 - m}{1 - t}$ for $\Mcal' \in \{ \Acal, \Lcal \}$. 

We construct the conditional vector field for a point cloud living on a $n \times 3$-dimensional flat torus invariant to global translations. First, we construct the naive geodesic path, which may cross the periodic boundary:
\begin{align}
    \exp_{f^{i}}(\dot{f^{i}}) &\coloneqq f^{i} + \dot{f^{i}} - \lfloor f^{i} + \dot{f^{i}} \rfloor, \\
    \log_{f^{i}_0}(f^{i}_1) &\coloneqq \frac{1}{2\pi} \atantwo\lB\sin(\omega^i), \cos(\omega^i) \rB, \label{eqn:atom_wise_torus_logmap} \\
    \omega^i &\coloneqq 2 \pi(f^{i}_1 - f^{i}_0), \label{eqn:torus_angular_frequency_difference}
\end{align}
where $\dot{f^{i}} \in \Tcal_{f^{i}}\Fcal^{i}$ for $i = 1, \ldots, n$. These amount to an atom wise application of $\log_{\fbold_0}$ on $\fbold_1$ and $\exp_{\fbold}$ on $\dot{\fbold} \in \Tcal_{\fbold}\Mcal$ respectively. Specifically, $d(\fbold, \fbold_1) \coloneqq \lVert \log_{\fbold_1}(\fbold) \rVert^{2}$ and 
$\nabla_{\fbold} d(\fbold, \fbold_1) = -2\log_{\fbold_1}(\fbold)$.
That would imply a target conditional vector field of $\frac{-\log_{\fbold_1}(\fbold)}{1 - t}$: a function which is equivariant--\emph{not invariant}--to translation $\tau$! We address this by removing the average torus translation from $\fbold_1$ to $\fbold$:
\begin{equation}
\label{eqn:translation_invariant_conditional_vf}
    u_t^{\Fcal}(\fbold \mid \fbold_1) \coloneqq \log_{\fbold_1}(\fbold) - \frac{1}{n} \sum_{i=1}^{n} \log_{f^{i}_1}(f^{i}).
\end{equation}
Our approach is similar to subtracting the mean of a point cloud in Euclidean space; however, it occurs in the tangent space instead of on the manifold. Given the factorization of the inner product on $\Ccal$ (Appendix~\ref{appendix:preliminaries}), our objective is:
\begin{align}
    &\Ebb_{t, q(\abold_1, \fbold_1, \lbold_1), p(\abold_0), p(\fbold_0), p(\lbold_0)} \Bigl[
    \frac{\lambda_{\abold}}{h n} \left\lVert v_t^{\Acal, \theta}(\cbold_t) + \abold_0 - \abold_1 \right\rVert^2 \nonumber \\
    & + \frac{\lambda_{\fbold}}{3n} \left\lVert v_t^{\Fcal, \theta}(\cbold_t) + \log_{\fbold_1}(\fbold_0) - \frac{1}{n} \sum_{i=1}^{n} \log_{f^{i}_1}(f^{i}_0) \right\rVert^2 \label{eqn:objective} \\ 
    & + \frac{\lambda_{\lbold}}{6} \left\lVert v_t^{\Lcal, \theta}(\cbold_t) + \lbold_0 - \lbold_1 \right\rVert^2 \Bigr], \nonumber
\end{align}
where we've normalized by dimension and $\lambda_{\abold}, \lambda_{\fbold}, \lambda_{\lbold} \in \Rbb^+$ are hyperparameters and $t\sim\text{Uniform}(0,1)$. In practice, since we have a closed form geodesic for all of our manifolds, our supervision signals are computed by evaluating conditional flow $\psi_t(\cbold \mid \cbold_1)$ on a minibatch determined by \eqref{eqn:geodesic} at time $t$ and taking the gradient with automatic differentiation, and in the component on $\Fcal$ we subtract the mean.

\paragraph{Symmetries of the marginal path}
We show the symmetries of the conditional probability paths and construct the marginal path. Conditional probability path $p_t(\cbold \mid \cbold_1)$ mapping $p_0=p(\cbold)$ to $p_1(\cbold \mid \cbold_1)$ is generated by tuple $u_t(\cbold \mid \cbold_1) \coloneqq \lp u_t^{\Acal}(\abold \mid \abold_1), u_t^{\Fcal}(\fbold \mid \fbold_1), u_t^{\Lcal}(\lbold \mid \lbold_1) \rp$ formed by direct sum. Conditional vector fields $u_t^{\Acal}$ and $u_t^{\Fcal}$ are permutation equivariant through relabeling of particles and therefore $p_t(\abold \mid \abold_1)$ and $p_t(\fbold \mid \fbold_1)$ are invariant to permutation by \citet{kohler2020equivariant}. The representation of $\lbold$ makes $p_t(\lbold \mid \lbold_1)$ invariant to rotation. Finally, by translating away the mean tangent fractional coordinate we relaxed the traditional requirement in Flow Matching that conditional path $p_1(\fbold \mid \fbold_1) = \delta(\fbold - \fbold_1)$ and instead allow $p_1$ to concentrate on an equivalence class of $\fbold_1$ where all members in the same class are related by a translation. Therefore, $p_1(\fbold \mid \fbold_1)$ remains translation equivariant but the marginal probability path ends up translation invariant  (Theorem~\ref{thm:invariant_flow}). 
Our unconditional vector field $u_t$, generating unconditional probability path $p_t$, connecting $p_0=p$ to $p_1=q$ is:
\begin{align}
    u_t(\cbold) &\coloneqq \int_\Ccal u_t(\cbold \mid \cbold_1) \frac{p_t(\cbold \mid \cbold_1)q(\cbold_1)}{p_t(\cbold)} \, d\cbold_1 \label{eqn:unconditional_vector_field} \\
    p_{t}(\cbold) &\coloneqq \int_\Ccal p_t(\cbold \mid \cbold_1) q(\cbold_1) \, d\cbold_1. \label{eqn:unconditional_probability_path}
\end{align}
Our construction enforces that $p_{t}(\cbold)$ is invariant to permutation, translation, and rotation, with proof in Appendix~\ref{appendix:marginal_probability_path_invariance}.

\paragraph{Estimated marginal path}
We specify our model, the unconditional probability path $p_t^{\theta}(\cbold)$, generated by $v_t^{\theta}(\cbold)$, 
\begin{align}
    p_{t}^{\theta}(\cbold) \coloneqq \int_\Ccal p_t^{\theta}(\cbold \mid \cbold_1) q(\cbold_1) d\cbold_1,
\end{align}
trained by optimizing \eqref{eqn:objective}, and when $p_0 = p$ then $p_1 \approx q$. We let $v_t^{\theta}(\cbold)$ be a graph neural network in the style of \cite{satorras2021n, jiao2023crystal} that enforces equivariance to permutation for $v_t^{\Acal, \Fcal, \theta}(\abold, \fbold)$ via message passing and invariance to translation in $v_t^{\Fcal, \theta}(\fbold)$ by featurizing graph edges as displacements between atoms. Invariance to rotation of $v_t^{\Lcal, \theta}(\lbold)$ is enforced by the representation of $\lbold$. After enforcing these symmetries in our network, we know that $p_{t}^{\theta}(\cbold)$ has the invariances desired by design. For more details about the graph neural network, see Appendix~\ref{appendix:neural_network}.

\paragraph{Inference Anti-Annealing}
A numerical trick which increased the performance of our neural network on the proxy metrics was to adjust the predicted velocity $v_t^{\theta}(\cbold)$ at inference time. We write the ordinary differential equation and initial condition $\cbold \sim p(\cbold)$ that defines the flow,
\begin{align}
    \label{eqn:anti-annealing}
    \frac{d}{dt} \psi_t^{\theta} &= s(t) v_{t}^{\theta}(\psi_t^{\theta}(\cbold)), & \psi_0^{\theta}(\cbold) &= \cbold,
\end{align}
but include a time-dependent velocity scaling term $s(t) \coloneqq 1 + s't$ where $s'$ is a hyperparameter. We typically found best performance when $0 \leq s' \leq 10$. Notably, we also found that selectively applying the velocity increase to particular variables had a significant effect. In CSP, it was helpful for fractional coordinates but hurtful for lattice parameters $\lbold$. We increased the fractional coordinate velocity for our reported results. For DNG, the trend was not as simple. Further investigation of this effect through ablation study can be found in Appendix~\ref{appendix:results_continued}.
This effect has been observed in multiple other studies \cite{yim2023fast, bose2023se}.

\section{Experiments}
\label{sec:experiments}

We evaluate FlowMM on the two tasks we set out at the beginning of the paper: Crystal Structure Prediction and De Novo Generation. We apply proxy metrics in all experiments, with a focus on inference-stage efficiency. We additionally investigate the stability of DNG samples by performing extensive density functional theory calculations.

\subsection{Crystal Structure Prediction}
\label{sec:crystal_structure_prediction}
We perform CSP on all datasets (Perov-5, Carbon-24, MP-20, and MPTS-52) with CDVAE, DiffCSP and FlowMM, evaluating them with proxy metrics computed using \texttt{StructureMatcher} \cite{ong2013python}. We present the Match Rate and the Root-Mean-Square Error (RMSE) in Table~\ref{table:conditional_metrics}.
DiffCSP and FlowMM use exactly the same underlying neural network in an apples-to-apples comparison. FlowMM outperforms competing models on the more challenging \& realistic datasets (MP-20 and MPTS-52) on both metrics by a considerable margin. Figure~\ref{fig:match_rate_per_step} investigates sampling efficiency by comparing the match rate of DiffCSP and FlowMM as a function of number of integration steps. FlowMM achieves a higher match rate with far fewer integration steps, which corresponds to more efficient inference. FlowMM achieves maximum match rate in about 50 steps, at least an order of magnitude decrease in inference time cost compared to the 1000 steps used by DiffCSP. We ablate both inference anti-annealing and the proposed base distribution $p(\lbold)$ and confirm that FlowMM is competitive or outperforms other models in terms of Match Rate without those enhancements. We additionally report inference-time uncertainty. Those results are located in Appendix~\ref{appendix:results_continued}.
\begin{figure}[ht]
    \vskip 0.0in
    \begin{center}
    \centerline{\includegraphics[width=\columnwidth]{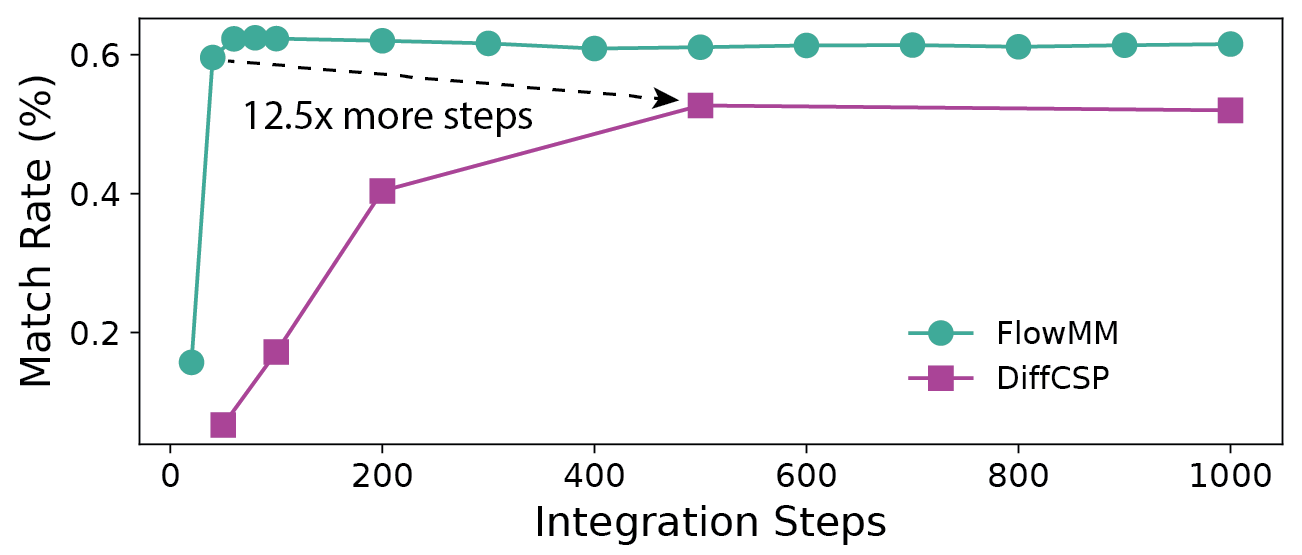}}
    \caption{
    Match rate as a function of number of integration steps on MP-20. FlowMM achieves a higher maximum match rate than DiffCSP overall, and does so $\sim$ 450 steps before DiffCSP. Results with Inference Anti-Annealing ablated are in Appendix~\ref{appendix:results_continued}.
    }
    \label{fig:match_rate_per_step}
    \end{center}
    \vskip -0.4in
\end{figure}
\begin{table*}
\centering
\caption{Results from crystal structure prediction on unit tests and realistic data sets.
}
\label{table:conditional_metrics}
\resizebox{0.95\textwidth}{!}{
\begin{tabular}{lcccccccc}
\toprule
 & \multicolumn{2}{c}{Perov-5} & \multicolumn{2}{c}{Carbon-24} & \multicolumn{2}{c}{MP-20} & \multicolumn{2}{c}{MPTS-52} \\
 & Match Rate (\%) $\uparrow$ & RMSE $\downarrow$ & Match Rate (\%) $\uparrow$ & RMSE $\downarrow$ & Match Rate (\%) $\uparrow$ & RMSE $\downarrow$ & Match Rate (\%) $\uparrow$ & RMSE $\downarrow$ \\
\midrule
CDVAE &  45.31 &  0.1138 & 17.09 & 0.2969 & 33.90 & 0.1045 & 5.34 & 0.2106 \\
DiffCSP & 52.02 & \textbf{0.0760} & 17.54 & \textbf{0.2759} & 51.49 & 0.0631 & 12.19 & 0.1786 \\
FlowMM & \textbf{53.15} & 0.0992 & \textbf{23.47} & 0.4122 & \textbf{61.39} & \textbf{0.0566} & \textbf{17.54} & \textbf{0.1726} \\
\bottomrule
\end{tabular}}
\end{table*}

\begin{table*}
\centering
\caption{Results from De Novo generation on the MP-20 dataset.}
\label{table:unconditional_metrics}
\resizebox{0.95\textwidth}{!}{
\begin{tabular}{cccccccccccc}
\toprule
 Method & Integration & \multicolumn{2}{c}{Validity (\%) $\uparrow$} & \multicolumn{2}{c}{Coverage (\%) $\uparrow$} & \multicolumn{2}{c}{Property $\downarrow$} & Stability Rate$^\dag$ (\%) $\uparrow$ & Cost $\downarrow$ & S.U.N. Rate $\uparrow$ & S.U.N. Cost $\downarrow$ \\
  &  Steps & Structural & Composition & Recall & Precision & wdist ($\rho$) & wdist ($N_{el}$) & MP-2023 & Steps/Stable$^\dag$ & MP-2023 & Steps/S.U.N. \\
\midrule
CDVAE & 5000 & \textbf{100.00} & \textbf{86.70} & 99.15 & 99.49 & 0.688 & 0.278 & 1.57 & 31.85 & 1.43 & 34.97 \\
DiffCSP & 1000 & \textbf{100.00} & 83.25 & \textbf{99.71} & \textbf{99.76} & 0.350 & 0.125 & \textbf{5.06} & 1.98 & \textbf{3.34} & 2.99 \\
\midrule
\multirow{4}{*}{FlowMM} & 250 & 96.58 & 83.47 & 99.48 & 99.65 & \textbf{0.261} & \textbf{0.107} & 4.32 & \textbf{0.58} & 2.38 & \textbf{1.05} \\
  & 500 & 96.86 & 83.24 & 99.38 & 99.63 & \textbf{0.075} & \textbf{0.079} & 4.19 & \textbf{1.19} & 2.45 & \textbf{2.04} \\
  & 750 & 96.78 & 83.08 & 99.64 & 99.63 & \textbf{0.281} & \textbf{0.097} & 4.14 & \textbf{1.81} & 2.22 & 3.38 \\
  & 1000 & 96.85 & 83.19 & 99.49 & 99.58 & \textbf{0.239} & \textbf{0.083} & 4.65 & 2.15 & 2.34 & 4.27 \\
\bottomrule
\bigskip
Stable$^\dag$ implies & \hspace{-1em} $E^{hull} < 0.0$ & \hspace{-1em} \& $N$-ary $\geq$ 2.
\end{tabular}%
}
\vskip -0.2in
\end{table*}
\begin{figure}[ht]
    \vskip 0.0in
    \begin{center}
    \centerline{\includegraphics[width=\columnwidth]{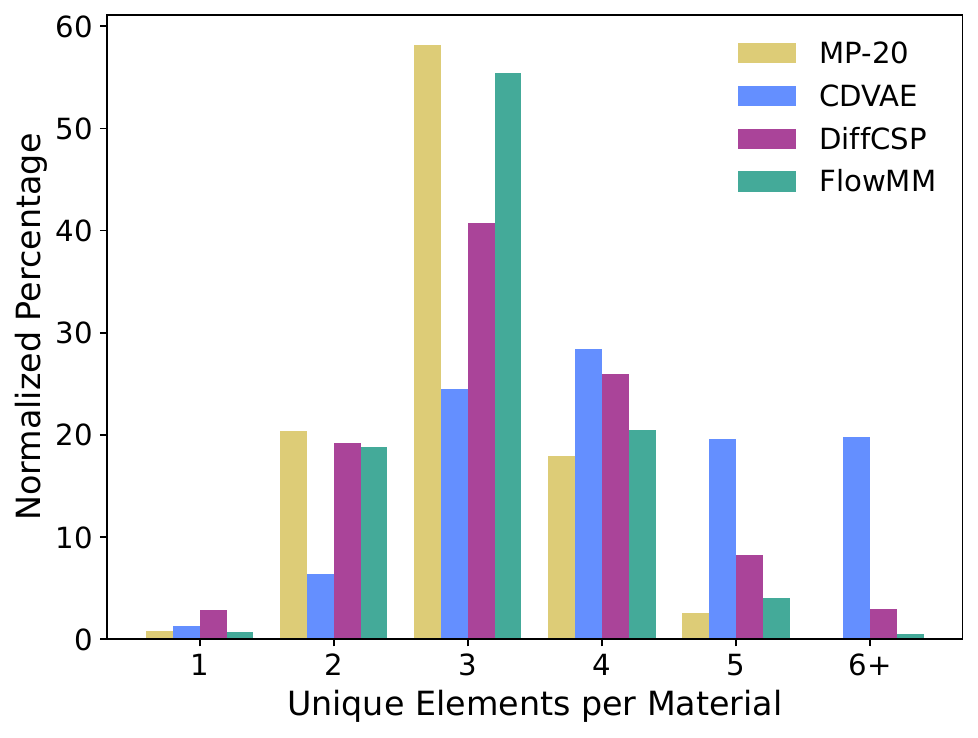}}
    \caption{The distribution of number of unique elements per material, or $N$-ary, for the MP-20 distribution and the generative models. FlowMM matches the MP-20 distribution closest, while CDVAE and DiffCSP generate too many materials with $N$-ary $\geq$ 5.}
    \label{fig:nary_dist}
    \end{center}
    \vskip -0.4in
\end{figure}

\subsection{De novo generation}
\label{sec:unconditional_generation}

To evaluate de novo generation we trained models on the MP-20 dataset and we generated 10,000 structures from CDVAE and DiffCSP. For FlowMM, we generated 40,000 structures, in batches of 10,000, using a variable number of integration steps: $\{250, 500, 750, 1000\}$.
Table~\ref{table:unconditional_metrics} shows the proxy metrics along with the stability metrics computed using the generated structures.
FlowMM is competitive with the diffusion models on most metrics, but significantly outperforms them on several Wasserstein distance metrics between distributions of properties of generated structures and the test set, specifically: the atomic density $\rho$ and the number of unique elements per crystal $N_{el}$ (same as $N$-ary).

Table~\ref{table:unconditional_metrics} also shows two different stability metrics based on energy above hull ($E^{hull}$) calculations. To compute $E^{hull}$ for the experiments, we ran structure relaxations with CHGNet \cite{deng2023chgnet} and density functional theory and used those to determine the distance to the convex hull and thereby stability. Further details are in Appendix~\ref{appendix:preliminaries}.
Conventional methods \cite{glass2006uspex, pickard2011ab} involve random search and hundreds of expensive density functional theory evaluations. We aim to reduce the computational expense of De Novo Generation. Therefore, S.U.N. Cost is our most important metric as it indicate the expense of finding a new material in terms of integration steps at inference time. From Table~\ref{table:unconditional_metrics}, it is clear that FlowMM is competitive to DiffCSP on the S.U.N. Rate and Stablity Rate metrics, but significantly better on Cost and S.U.N. Cost due to the reduction in integration steps. This efficiency is typical of flow matching compared to diffusion \cite{shaul2023kinetic,yim2023fast,bose2023se}. We note the caveat that integration steps are not the only source of computational cost. Training, prerelaxation, and relaxation are all costs worth considering; however, they are slightly more difficult to benchmark. Furthermore, we found them to be approximately equal across models so we focus on the cost of inference, which varies considerably.

We also compare the distribution of the computed $E^{hull}$ values for the various methods in Figure \ref{fig:energy_generation}. 
Structures generated by FlowMM are on average much more stable than CD-VAE, and are comparable to those generated by DiffCSP. %

We compare the distribution of materials according to the arity of the structure. Figure~\ref{fig:nary_dist} compares the $N$-ary distribution of each of the models to the MP-20 dataset. FlowMM matches the data distribution significantly better than the diffusion models, this is confirmed numerically with the Wasserstein distance metric $N_{el}$ in table \ref{table:unconditional_metrics}. We present a similar distribution for stable structures in Figure~\ref{fig:nary_stable_dist}.

\begin{figure}[ht]
    \vskip 0.0in
    \begin{center}
    \centerline{\includegraphics[width=\columnwidth]{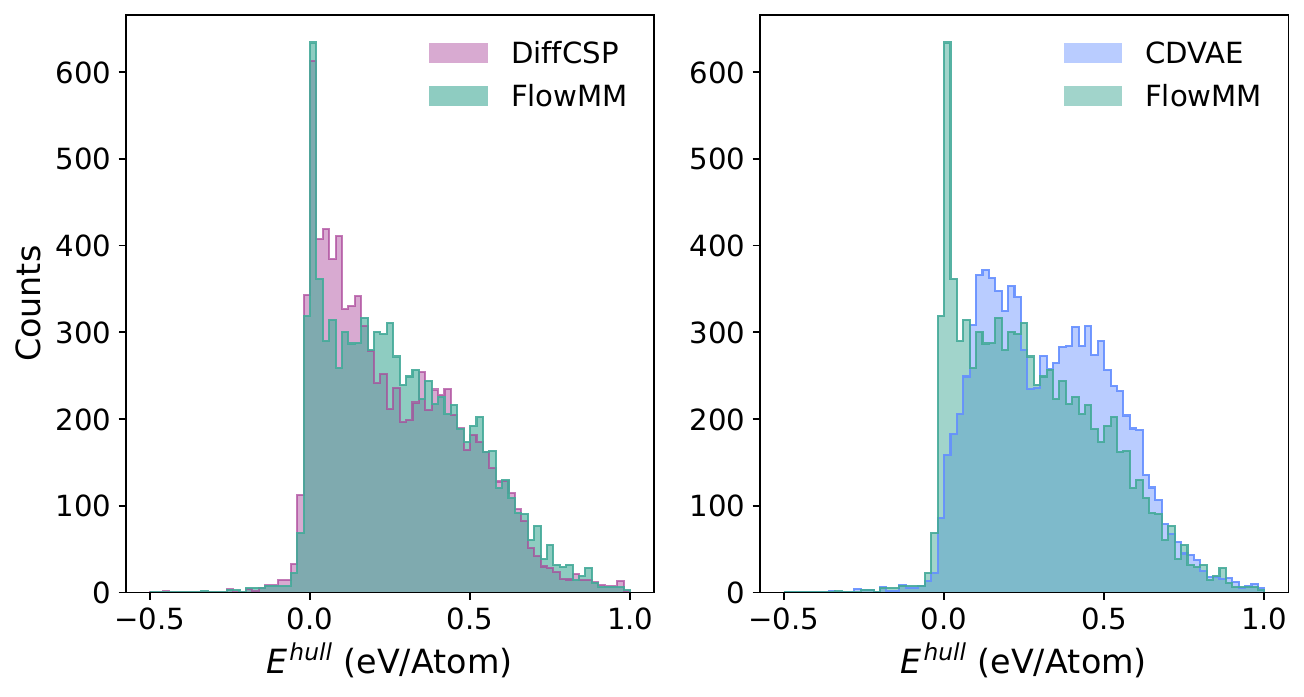}}
    \caption{Histogram comparing the distribution of $E^{hull}$ computed after relaxation with DFT for generative models DiffCSP and CDVAE with our proposed FlowMM on the DNG task. After relaxation on for all models, FlowMM generates lower energy structures compared to CDVAE and is competitive with DiffCSP.}
    \label{fig:energy_generation}
    \end{center}
    \vskip -0.4in
\end{figure}

\section{Conclusion}
\label{sec:conclusion}

We introduced a novel method for training continuous normalizing flows using a generalization of Riemannian Flow Matching for generating periodic crystal structures. We empirically tested our model using both CSP and DNG tasks and found strong performance on all proxy-metrics. In CSP, we used exactly the same network as DiffCSP, thereby performing an apples-to-apples comparison. For MP-20, FlowMM was able to outperform DiffCSP, in terms of Match Rate, with as few as 50 integration steps. This represents at least an order of magnitude improvement.

We rigorously evaluated the DNG structures for stability using the energy above hull to determine the Stability Rate, Cost, S.U.N. Rate, and S.U.N. Costs for each model. We found that FlowMM significantly outperforms both CDVAE and DiffCSP on Cost and S.U.N. Cost, and is competitive with DiffCSP on Stability Rate and S.U.N. Rate. 
This is enabled by FlowMM's 3x more efficient generation, in terms of integration steps, at inference time. 
The inference time efficiency can be explained by the kinetically optimal paths learned using the Flow Matching objective \cite{shaul2023kinetic}.
Resource limitations meant we did not investigate whether FlowMM could generate a similar number of stable structures using only a handful of integration steps. Based on the extremely efficient CSP results in Figure~\ref{fig:match_rate_per_step}, this would be an interesting direction for future work.

\section*{Impact Statement}
Our paper presents a generative model for predicting the composition and structure of stable materials. Our work may aid in the discovery of novel materials that could catalyze chemical reactions, enable higher energy density battery technology, and advance other areas of materials science and chemistry. The downstream effects are difficult to judge, but the challenges associated with taking a computational prediction to synthesized structure imply the societal impacts are likely going to be limited to new research directions.

\section*{Acknowledgements}
The authors thank C. Lawrence Zitnick, Kyle Michel, Vahe Gharakhanyan, Abhishek Das, Luis Barroso-Luque, Janice Lan, Muhammed Shuaibi, Brook Wander, and Zachary Ulissi for helpful discussions. Meta provided the compute.

\bibliography{references}
\bibliographystyle{icml2024}

\newpage
\appendix
\onecolumn
\section{Preliminaries Continued}
\label{appendix:preliminaries}

\subsection{Datasets}
We consider two realistic datasets and two unit test datasets. The first two are realistic and the second two are unit tests. All datasets are divided into 60\% training data, 20\% validation data, 20\% test data. We use the same splits as \citet{xie2021crystal} and \citet{jiao2023crystal}.

\paragraph{Materials Project-20 \cite{xie2021crystal}}

Also known as \emph{MP-20}. Contains 45231 samples. Contains all materials with a maximum of 20 atoms per unit cell and within 0.08 eV/atom in the Materials Project database \cite{jain2013materials} from around July 2021. Materials containing radioactive atoms are removed.

\paragraph{Materials Project Time Splits-52 \cite{baird2024mpts}}
Also known as \emph{MPTS-52}. Contains 40476 samples. Uses similar criteria as MP-20, but allows materials with atoms up to 52 in a single unit cell and no elemental filtering is applied. Furthermore, the train, validation, and test splits are organized in chronological order. Therefore, the oldest materials are in the training set and the newest ones are in the test set. Note: this dataset has fewer samples than MP-20 because some materials are entered into the Materials Project database without first publication timestamp information. Those materials are omitted from the dataset.

\paragraph{Perovskite-5 \cite{castelli2012new}}
Also known as \emph{perov} or \emph{perov-5}. Contains 18928 samples. All materials have five atoms per unit cell located at the same fractional coordinate values and lattice angles are fixed. Only the lattice lengths and atomic types change.

\paragraph{Carbon-24 \cite{pickard2020cabon}}
Also known as \emph{carbon}. Contains 10153 samples. Each material contains only carbon atoms, but the other variables are not fixed. This leads to a challenging CSP problem because there are typically multiple geometries for every n-atom set of carbon atoms. This is reflected in depressed match rate scores.

\subsection{Proxy metrics}
\paragraph{Crystal Structure Prediction}
Following \citet{jiao2023crystal}, we sampled the CSP model using held out structures as conditioning and measured the \emph{match rate} and \emph{Root-Mean-Square Error (RMSE)}, according to the output of \texttt{StructureMatcher} \citet{ong2013python} with settings $stol=0.5, angle\_tol=10, ltol=0.3$. 
Match rate is the number of generated structures that \texttt{StructureMatcher} find are within tolerances defined above divided by the total number of held-out structures. RMSE is computed when the held-out and generated structures match (otherwise it does not enter the reported statistics), then normalized by $(V/N)^{1/3}$ as is standard. Unlike DiffCSP, we did not compute multi-sample statistics given the same input composition.

\paragraph{De novo generation}
A composition is structurally valid when all pairwise distance between atoms are greater than 0.5 \AA. A crystal is compositionally valid when a simple heuristic system, SMACT \cite{davies2019smact}, determines that the crystal would be charge neutral. Coverage for both COV-R (recall) and COV-P (precision) are standard Recall \& Precision metrics computed after on thresholding pairwise distances between 1,000 samples that are both compositionally and structurally valid, and have been featurized by CrystalNN structural fingerprints~\citep{zimmermann2020local} and the normalized Magpie compositional fingerprints~\citep{ward2016general}.

We also compute two Wasserstein distances on computed properties of crystal samples from the test set and our generated structures. Namely, $d_\rho$ and $d_{N_{el}}$, corresponding to distances between the atomic density: number of atoms divided by unit cell volume and $N_{el}$ which is the number of unique elements in the unit cell, aka $N$-ary.

\subsection{Riemannian Manifolds}

Since $\Ccal$ is a product of Riemannian manifolds, it has a natural metric: For any $(\abold, \fbold, \lbold) \in \Ccal$, the tangent space $(\Acal \times \Fcal \times \Lcal)_{(\abold, \fbold, \lbold)}$ is canonically isomorphic to the direct sum $\Acal_{\abold} \oplus \Fcal_{\fbold} \oplus \Lcal_{\lbold}$. For vectors $\xi, \eta,  \in \Acal_{\abold}$; $\zeta, \chi\in \Fcal_{\fbold}$; and $\gamma, \omega \in \Lcal_{\lbold}$ we define the inner product of $\xi \oplus \zeta \oplus \gamma$ and $\eta \oplus \chi\oplus \omega$ by $\langle\xi \oplus \zeta \oplus \gamma, \eta \oplus \chi \oplus \omega \rangle_{(\abold, \fbold, \lbold)} \coloneqq \lambda_{a} \langle\xi, \eta\rangle_{\abold} + \lambda_{f} \langle \zeta, \xi \rangle_{\fbold} + \lambda_{l} \langle \gamma , \omega \rangle_{\lbold}$, where the subscripts indicate the tangent space in which the different inner products are calculated and $\lambda \coloneqq (\lambda_{a}, \lambda_{f}, \lambda_{l}) \in \Rbb^{3}$ is a hyperparameter. In particular, $\Acal_{\abold} \oplus\{0\} \oplus \{0\}$ is orthogonal to $\{0\} \oplus \Fcal_{\fbold} \oplus \{0\}$ and $\{0\} \oplus \{0\} \oplus \Lcal_{\lbold}$. The associated Riemannian measure on $\Acal \times \Fcal \times \Lcal$ is the product measure determined by $d V_{\Acal}$, $d V_{\Fcal}$, and $d V_{\Lcal}$ where $d V_\square$ denotes the Lebesgue measure on space $\square$
\citep{chavel1984eigenvalues}. Since our measure is a product measure, we may define the base probability measures of each space by densities absolutely continuous with respect to their respective Lebesgue measure.

\subsection{Specifics for De Novo Generation}
\paragraph{Determining number of atoms in the unit cell}
Above, we describe De Novo Generation via a distribution $p(\abold, \fbold, \lbold)$; however, this omits an important variable: $n$ the number of atoms in the unit cell. This distribution carries an implicit conditional on the number of atoms, namely $p(\abold, \fbold, \lbold \mid n)$. In other words, we have assumed that $n$ is known beforehand. However, we are interested in generating materials with a variable number of atoms. To solve this problem we follow the method of \citet{hoogeboom2022equivariant} and first sample $n \sim p(n)$ from the empirical distribution of the training set.

\paragraph{Methodology for identifying Stable, Unique, and Novel (S.U.N.) materials}
Our goal in DNG is to generate stable, unique and novel materials. In that effort, we generated samples from FlowMM, prerelaxed them using CHGNet, and finally relaxed them using density functional theory. Our method for determining whether a material is S.U.N. is as follows: 

(S) We determine the stability of our relaxed structures against the \texttt{Matbench Discovery} \cite{riebesell2023matbenchsoftware, riebesell2023matbench} convex hull, compiled from the Materials Project \cite{jain2013materials}, marked as 2nd Feburary 2023. (Although our training data comes from an earlier version of the database, we can still estimate the performance of the models using a later version of the convex hull. In this situation, it becomes more difficult to generate novel structures since those proposed structures may have been added to the database between 2021 and 2023.)

(N) We then take our stable generated structures and search the training data for any structure which contains the same set of elements. We ignore the frequency of the elements during this search, in order to catch similar materials with differently defined unit cells. We do a pairwise comparison between the generated structure and all ``element-matching'' examples from the training set using \texttt{StructureMatcher} \cite{ong2013python} with default settings. If there is no match, we consider that structure novel. 

(U) Finally, we take all stable and novel structures, then use \texttt{StructureMatcher} to pairwise compare those structures with themselves. We collect all pairwise matches and group them into ``equivalent'' structures. This group counts as only one structure for the purpose of S.U.N. computations, thereby enforcing uniqueness.

We want to emphasize that this is not a perfect system. \texttt{StructureMatcher} may fail to detect a match, or falsely detect one, due to the default settings of its threshold. Furthermore, without careful application beyond the default settings, \texttt{StructureMatcher} does not tell us about chemical function and may not yield matches for materials with extremely similar chemical properties. This could inflate the estimated number of S.U.N. materials \cite{cheetham2024artificial}. Additionally, \texttt{StructureMatcher} does not define an equivalence relation since it does satisfy the reflexivity property. We treat it like one here anyway since it holds approximately.

\paragraph{Stability metrics explained}
We are interested in several stability metrics:
\begin{align}
    Stability \; Rate &\coloneqq \frac{N_{\text{stable}}}{N_{\text{gen}}} \\
    Cost &\coloneqq \frac{N_{\text{int. steps}}}{Stability \; Rate} \\
	S.U.N. \; Rate &\coloneqq \frac{N_{\text{S.U.N.}}}{N_{\text{gen}}} \\
	S.U.N. \; Cost &\coloneqq \frac{N_{\text{int. steps}}}{S.U.N. \; Rate}
\end{align}
where $N_{\text{gen}}$ is the number of generated samples, $N_{\text{stable}}$ is the number of generated samples which are stable; $N_{\text{S.U.N.}}$ are the number of generated samples which are stable, unique, and novel; and $N_{\text{int. steps}}$ is the number of integration steps to produce a generated sample. By definition $N_{\text{S.U.N.}} \leq N_{\text{stable}} \leq N_{\text{gen}}$.

\subsection{Symmetry}
We discuss invariances to symmetry groups for crystal structures. We are interested in estimating a density with invariances to permutation, translation, and rotation as formalized in \eqref{eqn:permutation_invariance}, \eqref{eqn:translation_invariance}, and \eqref{eqn:rotation_invariance}. We visualize those symmetries in Figure~\ref{fig:symmetry}.
\begin{figure}[ht]
    \begin{center}
    \centerline{\includegraphics[width=0.7\columnwidth]{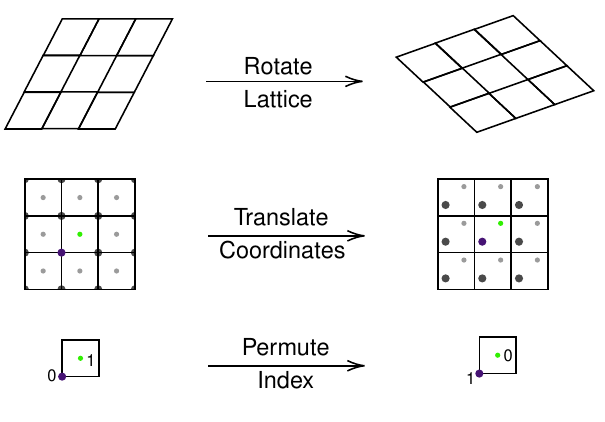}}
    \caption{Three symmetry actions are shown above; all of these actions would alter only the representation of the crystal, while leaving its chemical properties intact. (top) Rotation of a lattice formed by a unit cell. (mid) Translation of fractional coordinates within a unit cell. (bot) Permutation of atomic index. Since these images are two-dimensional, they do not capture the symmetry of a three-dimensional crystal. Furthermore, there are additional symmetries that are not represented in these pictures.}
    \label{fig:symmetry}
    \end{center}
    \vspace{-2em}
\end{figure}

There are additional symmetries for crystals that we did not explicitly model in FlowMM. Those are \emph{periodic cell choice invariance}: where the unit cell is skewed by $A$ with $\det A = 1$ and $A \in \Zbb^{3 \times 3}$ and the fractional coordinates are anti-skewed by $A^{-1}$, and \emph{supercell invariance} where the unit cell is grown to encompass another neighboring ``block'' and all of the atoms inside. These are discussed in more detail and visualized in \citet{zeni2023mattergen}.

\subsection{Riemannian Flow Matching visualization}
Since flow matching can be rather formal, and perhaps unintuitive when written symbolically, we draw cartoon representations of the regression target from the conditional vector field $u_t(\cdot \mid \cdot_{1})$ for the fractional coordinates $\fbold$ and lattice parameters $\lbold$ in Figure~\ref{fig:flow_matching_flat_torus} and Figure~\ref{fig:lengths_angles}, respectively.
In both cases, we also represent all necessary components to define the regression target namely the sample from the base distribution $\square_0$, the sample from the target distribution $\square_1$, the conditional path connecting them on the correct manifold, the point at time $t$ along the path where the conditional vector field is evaluated $\square_t$, and the conditional vector itself $u_t(\square_t \mid \square_{1})$ where $\square$ represents the relevant variable. We do not show $\abold$ because it occurs in Euclidean space and behaves like typical flow matching during training.
\begin{figure}[ht]
    \begin{center}
    \centerline{\includegraphics[width=0.5\columnwidth]{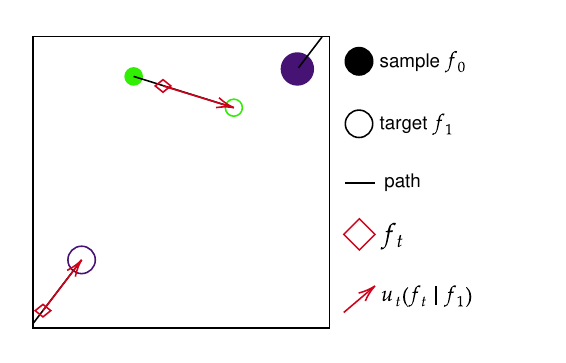}}
    \caption{We visualize the necessary components to express a hypothetical Riemannian Flow Matching regression target on a single data point for the fractional coordinates. The sample $f_0$ is drawn from the base uniform distribution for both points. The target $f_1$ is from the database of crystals. The path is drawn between the sample and the target following the geodesic path, i.e. wrapping around the boundary. The point $f_t$ along the path at time $t$ is indicated with a diamond. The conditional vector $u_t(f_t \mid f_1)$ at time $t$ is indicated as a vector. This vector is the regression target in Riemannian Flow Matching.}%
    \label{fig:flow_matching_flat_torus}
    \end{center}
    \vspace{-2em}
\end{figure}
\begin{figure}[ht]
    \begin{center}
    \centerline{\includegraphics[width=0.9\columnwidth]{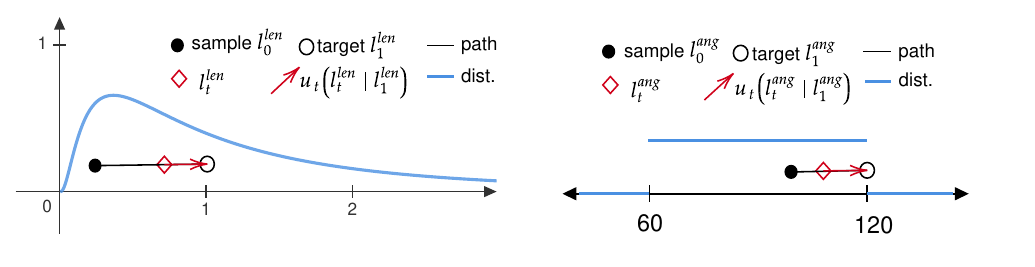}}
    \caption{We visualize the necessary components to express a hypothetical Flow Matching regression target on a single data point for the lattice parameters. Specifically, the length parameters are shown on the left and the angle parameters are on the right.
    The sample, target, path, point-along-path, and vector all follow the description in Figure~\ref{fig:flow_matching_flat_torus}, but are adapted for lattice parameters. Additionally, we display the base distribution as a blue line on this plot to indicate where samples $l_0^{len}$ and $l_0^{ang}$ can appear. 
    Note: This visualization occurs in so-called constrained space for $l^{ang}$; however, our proposed method does flow matching in the unconstrained space of $l^{ang}$ to avoid the boundaries of the $l^{ang}$ distribution. In this way, this figure visualises the challenges of doing precise flow matching in constrained space and the corresponding difficulty (and lower performance) of the ablated model. Recall that our transformation sends $60 \to -\infty$ and $120 \to \infty$.}
    \label{fig:lengths_angles}
    \end{center}
    \vspace{-2em}
\end{figure}

\subsection{Details about Density Functional Theory calculations}
For the stability metrics, we applied the Vienna ab initio simulation package (VASP) \cite{kresse1996efficient} to compute relaxed geometries and ground state energies at a temperature of 0 K and pressure of 0 atm. 
We used the default settings from the Materials Project \cite{jain2013materials} known as the \texttt{MPRelaxSet} with the PBE functional \cite{perdew1996generalized} and Hubbard U corrections. These correspond with the settings that our prerelaxation network CHGNet \cite{deng2023chgnet} was trained on, so prerelaxation should reduce DFT energy, up to the fitting error in CHGNet.

\subsection{Limitations of quantifying a computational approach to materials discovery}
There are a number of important limitations when it comes to using and quantifying the performance of generative models for materials discovery. 

Fundamental limitations for all computational methods include, but are not limited to: (a) Energy and stability computations all occur at nonphysical zero temperature and pressure settings. (b) Our material representation is not realistic since it assumes complete homogeneity and an infinite crystal structure without disorder. (c) There is a fundamental inaccuracy in density functional theory itself due to the basis set, the energy functional, and computational cost limitations... (etc.)

Generative models learn to fit empirical distributions. We are interested in generating S.U.N. materials which are not in our empirical distribution. In an imprecise way, we expect that FlowMM will generated materials that exist as ``interpolations'' between existing structures; however, the most interesting and new structures are well outside the existing empirical distribution. We do not expect FlowMM to find these interesting and new structures since it is not trained to do that.

Additionally, one must consider that proposed materials can still be extremely implausible, despite satisfying our definition of stability, or count a new material according to \texttt{StructureMatcher}, but a domain expert would not agree. Further discussion of these issues can be found in the work by \citet{cheetham2024artificial}. Furthermore, our tests using \texttt{StructureMatcher} rely on it defining an equivalence relation between structures; however, it does not due to its \texttt{rtol} parameter which means the reflexive property does not always hold. (However it does hold approximately.)

Finally, we want to emphasize that although we believe our \emph{Cost} metrics to be a good faith attempt to compare models, of course number of integration steps is only one of many dimensions to evaluate the cost of generating a novel material. We did not include training or relaxation time in these computations, for example. (Training time was approximately the same across models and relaxation time is independent of the generation method. Although, relaxation can depend on the accuracy of a reconstructed/generated structure.)

\section{Further Results}
\label{appendix:results_continued}

\subsection{Crystal Structure Prediction (CSP)}

To better understand how the components of FlowMM affect the performance on the CSP task, we performed several ablation studies and included estimates of uncertainty. We focused on three aspects in particular: (a) Ablating velocity anti-annealing, (b) ablating our proposed base distribution and unconstrained transformation for lattice parameters $\lbold$, and (c) estimating uncertainty in the inference stage, \emph{not during training}, by rerunning reconstruction with varied random seeds. 

Velocity anti-annealing is an inference-time hyperparameter that controls how much to scale the velocity prediction from our learned vector field during integration, see \eqref{eqn:anti-annealing}. We choose to apply velocity anti-annealing to no variables, fractional coordinates $\fbold$, lattice parameters $\lbold$, or both variables $\fbold, \lbold$. This amounts to a comprehensive ablation of the method. In CSP, we found that it was generally beneficial to apply velocity anti-annealing to $\fbold$, but not $\lbold$. We note that FlowMM typically saw improved performance compared to competitors without the need to scale the velocity with anti-annealing. We report these results in Tables~\ref{table:ablation_unit_test} and~\ref{table:ablation_realistic}. We also reproduce the match rate as a function of integration steps using FlowMM without Inference Anti-Annealing in Figure~\ref{fig:match_rate_all_v1}.

We describe our bespoke parameterization of $\lbold$ in Section~\ref{sec:rfm4mat} including a custom base distribution and a transformation to unconstrained space. 
Our neural network representation (Appendix~\ref{appendix:neural_network}) does not depend on ``physical'' lattice parameters. Therefore, it is also possible to simply use a typical normal base distribution, without transforming to unconstrained space, and let flow matching take care of learning the target distributions without inductive bias. Note that the ablation of the base distribution for lattice parameters requires training another model. During inference, such a model can produce representations that do not correspond to a real crystal; we will simply consider those generations as having failed. We jointly ablate these inductive biases along with a velocity anti-annealing ablation. We find that no matter the velocity anti-annealing scheme, our lattice parameter inductive biases provide a siginifcant performance boost.

Finally, for each case and dataset, we reran the generation from the corresponding trained model with the same hyperparameters three times. This therefore indicates variation at inference time, but not variation during training. (Although these are newly trained models compared to what is reported in Table~\ref{table:conditional_metrics} for every set of hyperparameter settings.) See Table~\ref{table:ablation_unit_test} for the ablation result on the unit test datasets and see Table~\ref{table:ablation_realistic} for the realistic datasets. Some unit test datasets reported the same match rate across three reconstructions when $n=20$, that's what leads to $\pm 0.0$.

\begin{figure}[ht]
    \vskip 0.0in
    \begin{center}
    \centerline{\includegraphics[width=0.5\columnwidth]{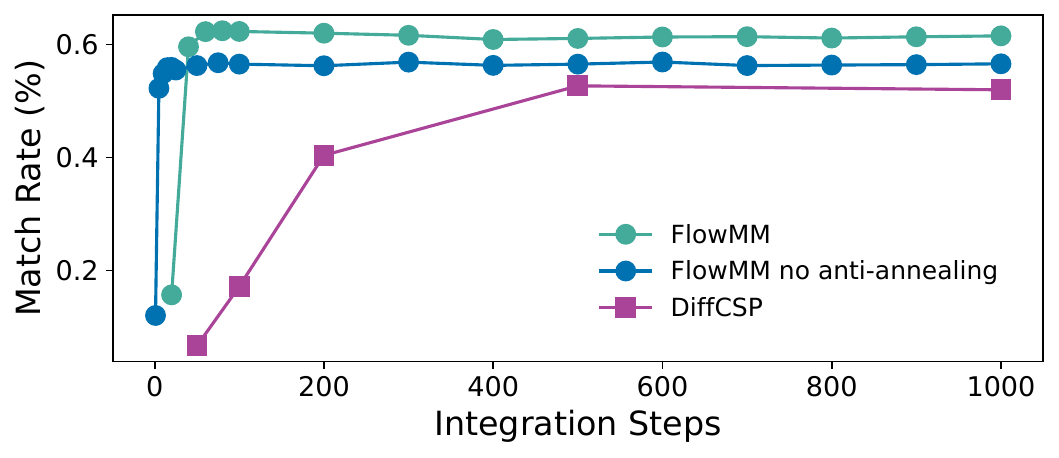}}
    \caption{Match rate as a function of number of integration steps on MP-20. FlowMM achieves a higher maximum match rate than DiffCSP overall without without Inference Anti-Annealing (on $\fbold$ not $\lbold$). Even without Inference Anti-Annealing, FlowMM outperforms DiffCSP at every number of integration steps.}
    \label{fig:match_rate_all_v1}
    \end{center}
    \vskip -0.4in
\end{figure}

\clearpage

\subsection{De Novo Generation (DNG)}
We present the distribution of generated stable crystals from CDVAE, DiffCSP, and FlowMM trained on MP-20 in Figure~\ref{fig:nary_stable_dist}. (For context, we include generations without regard to stability in Figure~\ref{fig:nary_all_dist}) The number of structures determined to be stable diminishes quickly as a function of $N$-ary, implying that models generating high $N$-ary materials do not relax to stable structures after density functional theory calculations.

\begin{figure}[htb]
    \centering
    \begin{subfigure}[b]{0.45\columnwidth}
        \centering
        \includegraphics[width=\columnwidth]{figures/nary_all_dist.pdf}
        \caption{Materials from MP-20 \& generative methods.}
        \label{fig:nary_all_dist}
    \end{subfigure}%
    \hspace{0.05\columnwidth}%
    \begin{subfigure}[b]{0.45\columnwidth}
        \centering
        \includegraphics[width=\columnwidth]{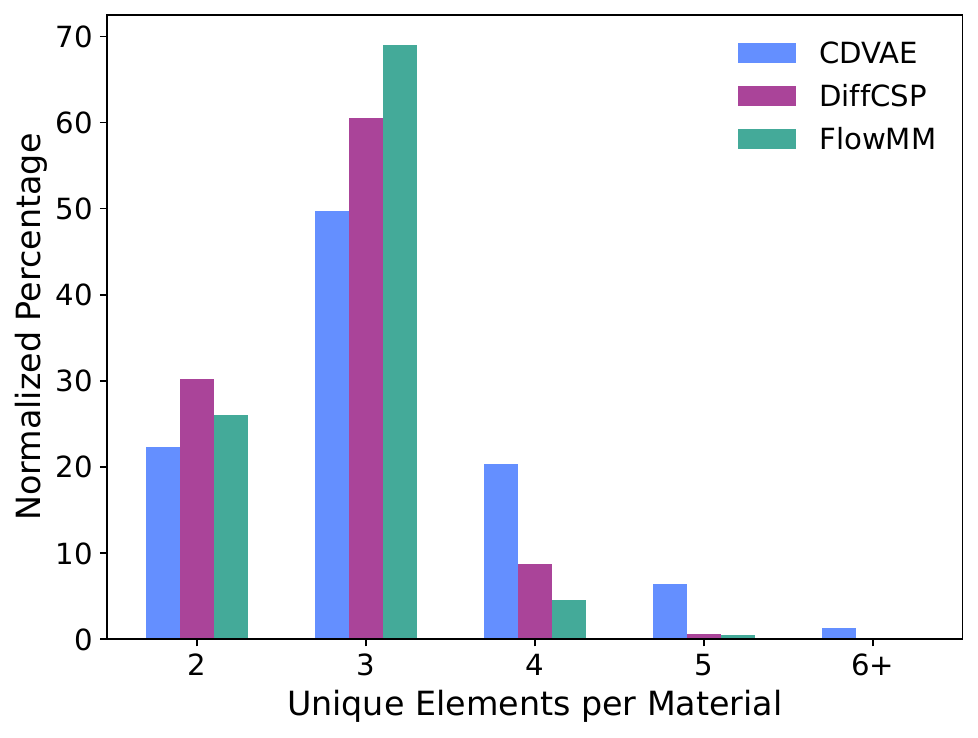}
        \caption{Stable materials from generative methods.}
        \label{fig:nary_stable_dist}
    \end{subfigure}%
    \caption{(a) Figure~\ref{fig:nary_dist} repeated for context. It represents a normalized histogram of the number of unique elements per crystal ($N$-ary) for MP-20 and the generative methods. (b) A normalized histogram of the number of unique elements per \emph{stable} crystal ($N$-ary) of structures generated by CDVAE, DiffCSP, and FlowMM. All structures are stable and were relaxed using density functional theory. Despite generating a large number of high $N$-ary structures, CDVAE and DiffCSP find relatively few stable ones after relaxation. The FlowMM columns correspond to generations with 1000 integration steps.}
    \label{fig:comparison}
\end{figure}

\begin{table*}
\centering
\caption{Results from ablation study on unit test datasets}
\label{table:ablation_unit_test}
\resizebox{0.95\textwidth}{!}{%
\begin{tabular}{llllllll}
\toprule
 &  &  &  & \multicolumn{2}{c}{Perov-5} & \multicolumn{2}{c}{Carbon-24} \\
 &  &  &  & Match Rate (\%) $\uparrow$ & RMSE $\downarrow$ & Match Rate (\%) $\uparrow$ & RMSE $\downarrow$ \\
\makecell{\# of\\Samples} & \makecell{Lattice $\lbold$\\Base Dist.} & \makecell{Anneal\\Coords} & \makecell{Anneal\\Lattice} &  &  &  &  \\
\midrule
\multirow[t]{8}{*}{1} & \multirow[t]{4}{*}{ablated} & \multirow[t]{2}{*}{False} & False & $52.62 \pm 1.06$ & $0.2822 \pm 0.0005$ & $15.70 \pm 0.85$ & $0.4262 \pm 0.0033$ \\
 &  &  & True & $0.00 \pm 0.00$ &  & $1.84 \pm 0.24$ & $0.4265 \pm 0.0041$ \\
\cline{3-8}
 &  & \multirow[t]{2}{*}{True} & False & $52.07 \pm 1.25$ & $0.2189 \pm 0.0062$ & $17.01 \pm 0.59$ & $0.4213 \pm 0.0024$ \\
 &  &  & True & $0.00 \pm 0.00$ &  & $1.71 \pm 0.12$ & $0.4229 \pm 0.0140$ \\
\cline{2-8} \cline{3-8}
 & \multirow[t]{4}{*}{proposed} & \multirow[t]{2}{*}{False} & False & $57.63 \pm 1.03$ & $0.2556 \pm 0.0039$ & $21.95 \pm 0.51$ & $0.4097 \pm 0.0017$ \\
 &  &  & True & $56.36 \pm 0.42$ & $0.1945 \pm 0.0020$ & $15.83 \pm 0.51$ & $0.3900 \pm 0.0017$ \\
\cline{3-8}
 &  & \multirow[t]{2}{*}{True} & False & $56.49 \pm 1.09$ & $0.1976 \pm 0.0025$ & $20.61 \pm 0.52$ & $0.3984 \pm 0.0027$ \\
 &  &  & True & $55.39 \pm 0.20$ & $0.1913 \pm 0.0036$ & $15.35 \pm 0.68$ & $0.3923 \pm 0.0055$ \\
\cline{1-8} \cline{2-8} \cline{3-8}
\multirow[t]{8}{*}{20} & \multirow[t]{4}{*}{ablated} & \multirow[t]{2}{*}{False} & False & $98.60 \pm 0.00$ & $0.0519 \pm 0.0007$ & $76.86 \pm 0.29$ & $0.3941 \pm 0.0008$ \\
 &  &  & True & $0.00 \pm 0.00$ &  & $26.88 \pm 1.12$ & $0.4208 \pm 0.0038$ \\
\cline{3-8}
 &  & \multirow[t]{2}{*}{True} & False & $98.60 \pm 0.00$ & $0.0372 \pm 0.0006$ & $77.93 \pm 0.23$ & $0.3870 \pm 0.0015$ \\
 &  &  & True & $0.00 \pm 0.00$ &  & $26.26 \pm 0.97$ & $0.4211 \pm 0.0026$ \\
\cline{2-8} \cline{3-8}
 & \multirow[t]{4}{*}{proposed} & \multirow[t]{2}{*}{False} & False & $98.60 \pm 0.00$ & $0.0428 \pm 0.0004$ & $79.85 \pm 0.36$ & $0.3599 \pm 0.0012$ \\
 &  &  & True & $98.60 \pm 0.00$ & $0.0334 \pm 0.0008$ & $75.85 \pm 1.31$ & $0.3349 \pm 0.0007$ \\
\cline{3-8}
 &  & \multirow[t]{2}{*}{True} & False & $98.60 \pm 0.00$ & $0.0328 \pm 0.0007$ & $84.15 \pm 0.54$ & $0.3301 \pm 0.0037$ \\
 &  &  & True & $98.60 \pm 0.00$ & $0.0331 \pm 0.0007$ & $76.08 \pm 0.16$ & $0.3376 \pm 0.0035$ \\
\cline{1-8} \cline{2-8} \cline{3-8}
\bottomrule
\end{tabular}%
}
\end{table*}

\begin{table*}
\centering
\caption{Results from ablation study on realistic datasets}
\label{table:ablation_realistic}
\resizebox{0.95\textwidth}{!}{%
\begin{tabular}{llllllll}
\toprule
 &  &  &  & \multicolumn{2}{c}{MP-20} & \multicolumn{2}{c}{MPTS-52} \\
 &  &  &  & Match Rate (\%) $\uparrow$ & RMSE $\downarrow$ & Match Rate (\%) $\uparrow$ & RMSE $\downarrow$ \\
\makecell{\# of\\Samples} & \makecell{Lattice $\lbold$\\Base Dist.} & \makecell{Anneal\\Coords} & \makecell{Anneal\\Lattice} &  &  &  &  \\
\midrule
\multirow[t]{8}{*}{1} & \multirow[t]{4}{*}{ablated} & \multirow[t]{2}{*}{False} & False & $43.21 \pm 0.47$ & $0.1812 \pm 0.0012$ & $4.04 \pm 0.21$ & $0.3490 \pm 0.0152$ \\
 &  &  & True & $0.00 \pm 0.00$ &  & $0.16 \pm 0.06$ & $0.4276 \pm 0.0306$ \\
\cline{3-8}
 &  & \multirow[t]{2}{*}{True} & False & $49.15 \pm 0.01$ & $0.0866 \pm 0.0012$ & $5.27 \pm 0.14$ & $0.2567 \pm 0.0091$ \\
 &  &  & True & $0.00 \pm 0.00$ &  & $0.15 \pm 0.03$ & $0.4255 \pm 0.0273$ \\
\cline{2-8} \cline{3-8}
 & \multirow[t]{4}{*}{proposed} & \multirow[t]{2}{*}{False} & False & $56.82 \pm 0.42$ & $0.1332 \pm 0.0016$ & $12.12 \pm 0.36$ & $0.2843 \pm 0.0056$ \\
 &  &  & True & $59.23 \pm 0.08$ & $0.0562 \pm 0.0018$ & $14.72 \pm 0.45$ & $0.1734 \pm 0.0020$ \\
\cline{3-8}
 &  & \multirow[t]{2}{*}{True} & False & $61.26 \pm 0.14$ & $0.0572 \pm 0.0014$ & $16.11 \pm 0.17$ & $0.1831 \pm 0.0021$ \\
 &  &  & True & $59.19 \pm 0.34$ & $0.0577 \pm 0.0008$ & $14.71 \pm 0.23$ & $0.1724 \pm 0.0012$ \\
\cline{1-8} \cline{2-8} \cline{3-8}
\multirow[t]{8}{*}{20} & \multirow[t]{4}{*}{ablated} & \multirow[t]{2}{*}{False} & False & $73.59 \pm 0.10$ & $0.1449 \pm 0.0006$ & $15.81 \pm 0.36$ & $0.3225 \pm 0.0025$ \\
 &  &  & True & $0.00 \pm 0.00$ &  & $1.56 \pm 0.03$ & $0.4298 \pm 0.0014$ \\
\cline{3-8}
 &  & \multirow[t]{2}{*}{True} & False & $75.68 \pm 0.20$ & $0.0791 \pm 0.0011$ & $20.63 \pm 0.24$ & $0.2581 \pm 0.0006$ \\
 &  &  & True & $0.00 \pm 0.00$ &  & $1.51 \pm 0.05$ & $0.4218 \pm 0.0084$ \\
\cline{2-8} \cline{3-8}
 & \multirow[t]{4}{*}{proposed} & \multirow[t]{2}{*}{False} & False & $76.55 \pm 0.09$ & $0.0834 \pm 0.0005$ & $28.99 \pm 0.04$ & $0.2445 \pm 0.0008$ \\
 &  &  & True & $70.07 \pm 0.04$ & $0.0472 \pm 0.0003$ & $28.73 \pm 0.25$ & $0.1655 \pm 0.0031$ \\
\cline{3-8}
 &  & \multirow[t]{2}{*}{True} & False & $75.81 \pm 0.07$ & $0.0479 \pm 0.0004$ & $34.05 \pm 0.04$ & $0.1813 \pm 0.0012$ \\
 &  &  & True & $70.00 \pm 0.11$ & $0.0474 \pm 0.0004$ & $28.95 \pm 0.09$ & $0.1666 \pm 0.0028$ \\
\cline{1-8} \cline{2-8} \cline{3-8}
\bottomrule
\end{tabular}%
}
\end{table*}

\clearpage

\section{Neural network}
\label{appendix:neural_network}
We employ a graph neural network from \citet{jiao2023crystal} that adapts EGNN \cite{satorras2021n} to fractional coordinates,
\begin{align}
    \label{eqn:mp0}
    \hbold^{i}_{(0)} &= \phi_{\hbold_{(0)}}(a^{i}) \\
    \label{eqn:mp1}
    \mbold^{ij}_{(s)} &= \varphi_m(\hbold^{i}_{(s-1)}, \hbold^{j}_{(s-1)}, \lbold, \text{SinusoidalEmbedding}(f^{j} - f^{i})), \\
    \label{eqn:mp2}
    \mbold^{i}_{(s)} &= \sum_{j=1}^N \mbold^{ij}_{(s)}, \\
    \label{eqn:mp3}
    \hbold^{i}_{(s)} &= \hbold^{i}_{(s-1)} + \varphi_h(\hbold^{i}_{(s-1)}, \mbold^{i}_{(s)}), \\
    \label{eqn:mp4}
    \dot{f^{i}} &= \varphi_{\dot{f}} \lp\hbold^{i}_{(\max s)} \rp \\
    \label{eqn:mp5}
    \dot{\lbold} &= \varphi_{\dot{\lbold}}\lp \frac{1}{n} \sum_{i=1}^{n} \hbold^{i}_{(\max s)} \rp
\end{align}
where $\mbold^{ij}_{(s)}, \mbold^{i}_{(s)}$ represent messages at layer $s$ between nodes $i$ and $j$, $\hbold^{j}_{(s)}$ represents hidden representation of node $j$ at layer $s$; $\varphi_m, \varphi_h, \phi_{\hbold_{(0)}}, \varphi_{\dot{f}}, \varphi_{\dot{\lbold}}$ represent parametric functions with all parameters noted together as $\theta$.  A symbol $\square \in \triangle$ with a dot above it $\dot{\square}$ represents the corresponding velocity components of the learned vector field, i.e. $\dot{\square} \coloneqq v_t^{\triangle,\theta}(\cbold_t)$. Finally, we define 
\begin{align}
    \text{SinusoidalEmbedding}(x) \coloneqq \lp \sin(2 \pi k x), \cos(2 \pi k x) \rp_{k=0, \ldots, n_{freq}}^{T},
\end{align}
where $n_{freq}$ is a hyperparameter. We standardized the $\lbold$ input to the network with z-scoring. We also standardized the outputs for predicted tangent vectors $\dot{\fbold}$, $\dot{\lbold}$. Models were trained using the \texttt{AdamW} optimizer \cite{loshchilov2018decoupled}. 

We parameterize our loss as an affine combination. That means we enforce the following condition for all experiments:
\begin{align}
    \lambda_{\lbold} + 
    \lambda_{\fbold} + 
    \lambda_{\abold} &= 1
\end{align}
enforced by
\begin{align}
    \Tilde{\lambda}_{\lbold} + \Tilde{\lambda}_{\fbold} + 
    \Tilde{\lambda}_{\abold} &\coloneqq \Tilde{\lambda}; &
    \lambda_{\lbold} &= \Tilde{\lambda}_{\lbold} / \Tilde{\lambda}, &
    \lambda_{\fbold} &= \Tilde{\lambda}_{\fbold} / \Tilde{\lambda}, &
    \lambda_{\abold} &= \Tilde{\lambda}_{\abold} / \Tilde{\lambda}.
\end{align}

In DNG, we introduce an additional loss term. When this term is included, we also include it in the affine combination.

We provide general and network hyperparameters in Table~\ref{table:general_hyperparameters} and Table~\ref{table:network_hyperparameters}. Recall, all datasets use a $60-20-20$ split between training, validation, and test data. We apply the same split as \citet{xie2021crystal} and \citet{jiao2023crystal}. More specific details exist in the corresponding experiment sections.
\begin{table}[h]
    \centering
    \begin{minipage}{0.5\linewidth}
        \centering
        \caption{General Hyperparameters}
        \label{table:general_hyperparameters}
        \resizebox{\textwidth}{!}{
        \begin{tabular}{lcccc}
        \toprule
         & \textbf{Carbon} & \textbf{Perov} & \textbf{MP-20} & \textbf{MPTS-52} \\
        \midrule
        Max Atoms & 24 & 20 & 20 & 52 \\
        Max Epochs & 8000 & 6000 & 2000 & 1000 \\
        Total Number of Samples & 10153 & 18928 & 45231 & 40476 \\
        Batch Size & 256 & 1024 & 256 & 64 \\
        \bottomrule
        \end{tabular}}
    \end{minipage}%
    \hspace{0.05\linewidth}%
    \begin{minipage}{0.4\linewidth}
        \centering
        \caption{Network Hyperparameters}
        \label{table:network_hyperparameters}
        \resizebox{0.8\textwidth}{!}{
        \begin{tabular}{lc}
        \toprule
         & \textbf{Value} \\
        \midrule
        Hidden Dimension & 512 \\
        Time Embedding Dimension & 256 \\
        Number of Layers & 6 \\
        Activation Function & silu \\
        Layer Norm & True \\
        \bottomrule
        \end{tabular}}
    \end{minipage}
\end{table}
\begin{table}[h]
    \centering
    \begin{minipage}{0.55\linewidth}
        \centering
        \caption{CSP Hyperparameters}
        \label{table:csp_hyperparameters}
        \resizebox{\textwidth}{!}{
        \begin{tabular}{lcccc}
        \toprule
         & \textbf{Carbon} & \textbf{Perov} & \textbf{MP-20} & \textbf{MPTS-52} \\
        \midrule
        Learning Rate & 0.001 & 0.0003 & 0.0001 & 0.0001 \\
        Weight Decay & 0.0 & 0.001 & 0.001 & 0.001 \\
        $\Tilde{\lambda}_{\fbold}$ (Frac Coords) & 400 & 1500 & 300 & 300 \\
        $\Tilde{\lambda}_{\lbold}$ (Lattice) & 1.0 & 1.0 & 1.0 & 1.0 \\
        \midrule
        $s'$ (Anti-Anneal Slope) & 2.0 & 1.0 & 10.0 & 5.0 \\
        Anneal $\fbold$ & False & False & True & True \\
        Anneal $\lbold$ & False & False & False & False \\
        \bottomrule
        \end{tabular}
        }
    \end{minipage}%
    \hspace{0.05\linewidth}%
    \begin{minipage}{0.35\linewidth}
        \centering
        \caption{DNG Hyperparameters}
        \label{table:dng_hyperparameters}
        \resizebox{0.8\textwidth}{!}{
        \begin{tabular}{lc}
        \toprule
         & \textbf{Value} \\
        \midrule
        Learning Rate & 0.0005 \\
        Weight Decay & 0.005 \\
        $\Tilde{\lambda}_{\abold}$ (Atom Type) & 300 \\
        $\Tilde{\lambda}_{\fbold}$ (Frac Coords) & 600 \\
        $\Tilde{\lambda}_{\lbold}$ (Lattice) & 1.0 \\
        $\Tilde{\lambda}_{\text{sce}}$ (Cross Entropy) & 20 \\
        \midrule
        $s'$ (Anti-Annealing Slope) & 5.0 \\
        Anneal $\abold$ & False \\
        Anneal $\fbold$ & True \\
        Anneal $\lbold$ & True \\
        \bottomrule
        \end{tabular}}
    \end{minipage}
\end{table}

\paragraph{Crystal Structure Prediction}
We employed the network defined above for the CSP experiments. We swept over a grid and selected the model that maximized the match rate on 2,000 reconstructions from (a subset of) the validation set.

We swept learning rate $\in \{ 0.001, 0.0003 \}$, weight decay $\in \{ 0.003, 0.001, 0.0\}$, gradient clipping = 0.5, $\Tilde{\lambda}_{\lbold} = 1$, $\Tilde{\lambda}_{\fbold} \in \{ 100, 200, 300, 400, 500 \}$.

We performed multiple reconstructions using various values for the anti-annealing velocity scheduler with coefficient $s' \in \{0, 1, 2, 3.5, 5, 10 \}$. We found that the velocity scheduler to be most effective when applied to $\fbold$ alone. Ablation tests of this phenomenon can be found in Appendix~\ref{appendix:results_continued}.

\paragraph{De novo generation}
For the unconditional experiment, we made some changes to the network above that we found favorable for the featurization of the crystal. The new network and featurization is:
\begin{align}
    \label{eqn:dng_mp0}
    \hbold^{i}_{(0)} &= \phi_{\hbold_{(0)}}(a^{i}) \\
    \label{eqn:dng_mp1}
    \mbold^{ij}_{(s)} &= \varphi_m \lp \hbold^{i}_{(s-1)}, \hbold^{j}_{(s-1)}, \lbold, \text{SinusoidalEmbedding}\lp \log_{f^{i}}(f^{j}) \rp, \zbold(n), \frac{\lboldt^{T} \lboldt \fbold}{\lVert \lboldt^{T} \lboldt \fbold \rVert}\rp, \\
    \label{eqn:dng_mp2}
    \mbold^{i}_{(s)} &= \sum_{j=1}^N \mbold^{ij}_{(s)}, \\
    \label{eqn:dng_mp3}
    \hbold^{i}_{(s)} &= \hbold^{i}_{(s-1)} + \varphi_h(\hbold^{i}_{(s-1)}, \mbold^{i}_{(s)}), \\
    \label{eqn:dng_mp4}
    \dot{f^{i}} &= \varphi_{\dot{f}} \lp\hbold^{i}_{(\max s)} \rp \\
    \label{eqn:dng_mp5}
    \dot{\lbold} &= \varphi_{\dot{\lbold}}\lp \frac{1}{n} \sum_{i=1}^{n} \hbold^{i}_{(\max s)}, \sum_{i=1}^{n} \hbold^{i}_{(\max s)} \rp \\
    \label{eqn:dng_mp6}
    \dot{a^{i}} &= \varphi_{\dot{a}} \lp\hbold^{i}_{(\max s)} \rp \\
\end{align}
where $\log_{f^{i}}(f^{j})$ is defined in \eqref{eqn:atom_wise_torus_logmap} as the logmap for the flat torus, $\zbold(n)$ represents a learned embedding of the number of atoms $n$ in the crystal's unit cell with parameters concatenated to $\theta$, $\frac{\lboldt^{T} \lboldt \fbold}{\lVert \lboldt^{T} \lboldt \fbold \rVert}$ is the cosine of the angles between the Cartesian edge between atoms and the three lattice vectors \cite{zeni2023mattergen}, $\varphi_{\dot{\lbold}}$ takes in both mean and sum pooling across nodes, and $\varphi_{\dot{a}}$ represents a parametric function with parameters concatenated to $\theta$. A symbol $\square \in \triangle$ with a dot above it $\dot{\square}$ represents the corresponding velocity components of the learned vector field, i.e. $\dot{\square} \coloneqq v_t^{\triangle,\theta}(\cbold_t)$. The additional edge features in \eqref{eqn:dng_mp1} are invariant to translation and rotation. Recall that at $t=0$, $a^{i}$ is drawn randomly from the base distribution.

We included an additional loss term with a Lagrange multiplier in our loss function. Namely a version of what \citet{chen2022analog} call \emph{sigmoid cross entropy} in Appendix B.2, adapted for atom types represented as analog bits: 
\begin{align}
    \label{eqn:sigmoid-cross-entropy}
    \quad \Lfrak_{\text{sce}} &\coloneqq - \log \sigma \lp \abold_1 \cdot \hat{\abold}_1 \rp, \\
    \text{with } \hat{\abold}_1 &\coloneqq (1-t) \, \dot{\abold}_t + \abold_t,
\end{align}
where $\sigma$ is the logistic sigmoid, $\abold_1 \in \lC -1, 1 \rC^{h}$ is the target analog bit-style atom type vector, $t$ is the time where the loss is evaluated, $\abold_t$ is the point along the path between $\abold_0$ and $\abold_1$ at time $t$, $\cdot$ represents the inner product between vectors, and $\dot{\abold}_t \coloneqq v_t^{\Acal,\theta}(\cbold_t)$.
This represents a one-step numerical estimate of the final predicted position of $\abold_t$ at $t=1$. We add this term into the objective \eqref{eqn:objective} in affine combination with unnormalized Lagrange multiplier $\Tilde{\lambda}_{\text{sce}}$ suitably normalized as $\lambda_{\text{sce}}$ .

We performed a sweep over the hyperparameters learning rate $\in \{0.0001, 0.0003, 0.0005, 0.0007, 0.001 \}$, weight decay $\in \{ 0.0, 0.0001, 0.0005, 0.001, 0.003, 0.005\}$, gradient clipping = 0.5, $\Tilde{\lambda}_{\lbold} = 1$, $\Tilde{\lambda}_{\fbold} \in \{ 40, 100, 200, 300, 400, 600, 800 \}$, $\Tilde{\lambda}_{\abold} \in \{40, 100, 200, 300, 400, 600, 800, 1200, 1600 \}$, and $\Tilde{\lambda}_{\text{sce}} \in \{ 0, 20 \}$, and velocity schedule coefficient $s' \in \{0, 1, 2, 5 \}$

We performed model selection using generated samples from each model in the sweep. After computing the proxy metrics on those samples, we collected the $\approx 50$ models that were in the top $86$th percentile on (both structural \& compositional) validity, Wasserstein distance in density ($\rho$), and Wasserstein distance in number of unique elements ($N_{el}$). From those models, we prerelaxed the generations using CHGNet \cite{deng2023chgnet} and took the model which produced the most metastable structures (CHGNet energy above hull $< 0.1$ eV/atom). We reported the results from the one which then had the best performance on Stability Rate computed using the number of metastable structures.

\section{Enforcing G-invariance of marginal probability path}
\label{appendix:marginal_probability_path_invariance}

We assume the target distribution $q$ is $G$-invariant, where $G$ is defined as in the "Symmetries of crystals" paragraph, \ie for each $g \in G$, where $g = (\sigma, Q, \tau)$ consists of (i) permutation of atoms together with their fractional coordinates, (ii) rotation of the lattice, and (iii) translation of the fractional coordinates. 
Firstly, we show that generally for marginal probability paths where $p_1 = q$ as in \eqref{eqn:unconditional_probability_path_is_invariant}, in order to have $p_t(x)$ be $G$-invariant, it is sufficient to have $p_t(x | x_1)$ satisfy a simple pairwise $G$-invariant condition.

\begin{theorem}\label{thm:cond_prob_invariant}
For pairwise $G$-invariant conditional probability path $p_t(x|x_1)$, meaning $p_t(\g x| \g x_1)=p_t(x|x_1) \;\forall g\in G,\;x,x_1\in \Ccal$, the construction in \eqref{eqn:unconditional_probability_path_is_invariant} defines a $G$-invariant marginal distribution $p_t(x)$.
\end{theorem}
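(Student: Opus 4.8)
The plan is a direct change-of-variables argument in the integral defining $p_t$. Fix $g \in G$, write $g = (\sigma, Q, \tau)$, and fix $x \in \Ccal$. Starting from the construction in \eqref{eqn:unconditional_probability_path_is_invariant}, I would substitute $x_1 = \g x_1'$ in the integral. Three ingredients are needed: (i) the map $x_1' \mapsto \g x_1'$ is a bijection of $\Ccal$ onto itself that preserves the reference (product Riemannian) measure, so $dx_1 = d(\g x_1')$; (ii) $q$ is $G$-invariant by hypothesis, so $q(\g x_1') = q(x_1')$; and (iii) the conditional path is pairwise $G$-invariant, so $p_t(\g x \mid \g x_1') = p_t(x \mid x_1')$. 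Chaining these,
\[
p_t(\g x) = \int_\Ccal p_t(\g x \mid x_1)\, q(x_1)\, dx_1 = \int_\Ccal p_t(\g x \mid \g x_1')\, q(\g x_1')\, d(\g x_1') = \int_\Ccal p_t(x \mid x_1')\, q(x_1')\, dx_1' = p_t(x),
\]
which is exactly the desired $G$-invariance. This is the standard equivariant-generative-model argument (cf. \citet{kohler2020equivariant}), and the only non-routine part is checking (i) for our particular space $\Ccal$.

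To establish (i) I would treat the three factors of $\Ccal = \Acal \times \Fcal \times \Lcal$ separately, using the explicit action from Section~\ref{sec:preliminaries} and the product measure described in Appendix~\ref{appendix:preliminaries}. The permutation $\sigma$ simultaneously relabels the columns of $\abold$ (and, for DNG, their bit-representations) and of $\fbold$, and fixes $\lbold$; this is a coordinate permutation of the product manifold, hence an isometry, hence measure-preserving. The rotation $Q$ acts trivially on our chosen representation, $Q \cdot \cbold = \cbold$, because $\lbold$ is rotation-invariant, so there is literally nothing to check for the $Q$-component. The translation $\tau$ acts only on $\fbold$ by $f^i \mapsto f^i + \tau - \lfloor f^i + \tau \rfloor$; viewed intrinsically on the flat torus $\Fcal$ this is the translation isometry $f^i \mapsto f^i + \tau$, the floor merely selecting the fundamental-domain representative, so it preserves the Lebesgue/Haar measure on each torus factor. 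Since the reference measure on $\Ccal$ is the product of these factor measures (Appendix~\ref{appendix:preliminaries}), $g$ preserves it, and its inverse $g^{-1} \in G$ does too, so the substitution and the change-of-variables formula are legitimate for the nonnegative integrand (Tonelli).

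I expect essentially all the genuine content to sit in verifying (i), and even that is short once the action and the reference measure are written out explicitly; the role of the hypothesis ``pairwise $G$-invariant'' is precisely to make step (iii) immediate. The one place to be a little careful is the torus translation, where one should emphasize that the floor-function ``wrap-around'' does not spoil measure-preservation because it is just a choice of representative for a genuine isometry of the torus — the same subtlety that motivated the mean-subtraction in \eqref{eqn:translation_invariant_conditional_vf}. No integrability issues arise since $p_t(\cdot\mid x_1)$ and $q$ are probability densities and everything is nonnegative.
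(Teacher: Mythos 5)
Your proof is correct and follows essentially the same change-of-variables argument as the paper's: the paper also chains pairwise $G$-invariance of $p_t(\cdot\mid\cdot)$, $G$-invariance of $q$, and a measure-preserving substitution (it writes $|\det J_{g^{-1}}| = 1$ where you verify measure preservation factor by factor). Your extra care in checking measure preservation on each component of $\Ccal$ is a sound elaboration of the step the paper compresses into a single line, not a different route.
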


\begin{proof}
\begin{align*}
p_t( \g x) 
&= \int p_t(\g x | x_1) q(x_1) d x_1 && \text{defn. from \eqref{eqn:unconditional_probability_path_is_invariant}} \\
&= \int p_t(x | \ginv x_1) q(x_1) d x_1 && \text{pairwise $G$-invariance of $p_t(x | x_1)$} \\
&= \int p_t(x | \ginv x_1) q(\ginv x_1) d x_1 && \text{$G$-invariance of $q$} \\
&= \int p_t(x | \ginv x_1) q(\ginv x_1) \underbrace{ \left| \det J_{g^{-1}} \right| }_{=1} d (\ginv x_1) && \text{change of variables}\\
&= \int p_t(x | \tilde{x}_1) q(\tilde{x}_1) d \tilde{x}_1 && \text{$\tilde{x}_1 = \ginv x_1$}\\
&= p_t(x)
\end{align*}
\end{proof}

\paragraph{Constructing conditional flows that imply pairwise $G$-invariant probability paths}

In order to construct a pairwise $G$-invariant $p_t(x | x_1)$, 
we make use of three main approaches. One is to enforce $G$-equivariant vector fields, which correspond to $G$-equivariant flows and thus generate $G$-invariant probabilities, building on the observation of \citet{kohler2020equivariant} to the pairwise case. Another is to simply make use of representations that are $G$-invariant, resulting in $G$-invariant probabilities. Finally, we take a novel approach of generalizing the construction of Riemannian Flow Matching to equivalence classes and constructing flows between equivalence classes.

For the first approach, we require a $G$-invariant base distribution and that $u_t(\g x | \g x_1) = \g u_t(x | x_1)$. This ensures the flow satisfies $\psi_t( \g x_0 | \g x_1) = \g \psi_t( x_0 | x_1)$ and thus resulting in a pairwise $G$-invariant probability path.
This property is satisfied by the use of regular geodesic paths that we use during the training of Riemannian Flow Matching, because the shortest paths connecting any $x$ and $x_1$ on the manifolds that we consider here (flat tori and Euclidean) are simply simultaneously transformed alongside $x$ and $x_1$, for transformations such as permutation and rotation. We use this approach to enforce invariance to permutation of atoms.

The second approach is to bypass the need to enforce invariance in either $u_t(x | x_1)$ or $v_t^{\theta}(x)$ by instead using a representation of that is bijective with its entire equivalence class. We use this approach to enforce invariance to rotation of the lattice, by directly modeling angles and lengths.

The third approach is enabled by a generalization of the Riemannian Flow Matching framework in the case of a $G$-invariant $q$, relaxing the assumption that the conditional probability paths $p_t(x | x_1) = \delta(x - x_1)$ at $t=1$. Instead, we allow $p_t(x | x_1) = \delta(x - \tilde{x}_1)$ as long as $\tilde{x}_1 \in [x_1]$, the equivalence class of $x_1$, \ie $[x] = \{ \g x | \g \in G \}$.

\begin{theorem}\label{thm:invariant_flow}
Allowing the possibility of $G$-invariant conditional flow $\psi_t(x_0 | x_1)$, meaning $\psi_t(x_0 | \g x_1) = \psi_t(x_0 | x_1) \;\forall g\in G,\;x,x_1\in \Ccal$, if $\psi_t(x_0 | x_1) \in [x_1] \;\forall x,x_1\in \Ccal$, then the construction of $u_t(x)$ in \eqref{eqn:unconditional_probability_path} is valid and results in a marginal distribution that satisfies $p_1 = q$. 
\end{theorem}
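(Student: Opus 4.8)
I would prove the two assertions separately. First, ``the construction of $u_t$ is valid'', i.e. that the $u_t$ and $p_t$ defined by \eqref{eqn:unconditional_vector_field}--\eqref{eqn:unconditional_probability_path} satisfy the continuity equation together, follows from the standard Flow Matching marginalization theorem of \citet{lipman2022flow} in its Riemannian form \citep{chen2023riemannian}: whenever each conditional pair $\bigl(u_t(\cdot\mid x_1),\, p_t(\cdot\mid x_1)\bigr)$ solves the continuity equation on $\Ccal$, linearity of that equation in the density--field product shows that the pair obtained by averaging against $q(x_1)\,dx_1$ does too. The only new ingredient is that the relaxation lets $p_1(\cdot\mid x_1)$ be supported on the whole equivalence class $[x_1]$ rather than be a Dirac mass at $x_1$; since that changes only the $t=1$ endpoint and not the dynamics on $t\in[0,1)$, the argument is unaffected, and I would simply point this out.

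The substantive claim is $p_1 = q$. Starting from $p_1(x) = \int_\Ccal p_1(x\mid x_1)\, q(x_1)\, dx_1$ with $p_1(\cdot\mid x_1) = [\psi_1(\cdot\mid x_1)]_\# p_0$, the hypotheses give: (i) $\psi_1(x_0\mid\g x_1)=\psi_1(x_0\mid x_1)$, so $p_1(\cdot\mid x_1)$ depends on $x_1$ only through the orbit $[x_1]$; and (ii) $\psi_1(x_0\mid x_1)\in[x_1]$, so $p_1(\cdot\mid x_1)$ is supported on $[x_1]$. I would then strengthen this to: $p_1(\cdot\mid x_1)$ is the (Haar) uniform law on $[x_1]$. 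Writing $\psi_1(x_0\mid x_1) = h(x_0,x_1)\cdot x_1$ with $h\in G$, the $G$-invariance of the base distribution $p_0$ together with equivariance of the conditional flow in its first argument --- for the fractional-coordinate component this is immediate from \eqref{eqn:translation_invariant_conditional_vf}, since the mean-removed logmaps are translation-equivariant in the argument --- forces the law of $h$ under $x_0\sim p_0$ to be left-invariant on the compact group $G$, hence Haar. Granting this, for any test function $\phi$ we get $\int \phi(x)\, p_1(x)\, dx = \int_\Ccal \bigl( \int_G \phi(\g x_1)\, dg \bigr)\, q(x_1)\, dx_1$; and since for each fixed $g$ the change of variables $y=\g x_1$ has $\lvert\det J_{g^{-1}}\rvert = 1$ and $q$ is $G$-invariant, $\int \phi(\g x_1)\, q(x_1)\, dx_1 = \int \phi(y)\, q(y)\, dy$ --- exactly the computation in the proof of Theorem~\ref{thm:cond_prob_invariant} --- so Fubini over $G$ yields $\int \phi\, p_1 = \int \phi\, q$, i.e. $p_1 = q$. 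A lighter variant, if only equality of the induced laws over crystals is wanted, skips the uniformity step entirely: pushing through the quotient $\pi\colon\Ccal\to\Ccal/G$, fact (ii) gives $\pi_\# p_1(\cdot\mid x_1) = \delta_{[x_1]}$, hence $\pi_\# p_1 = \int \delta_{[x_1]}\, q(x_1)\, dx_1 = \pi_\# q$.

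The step I expect to be the main obstacle is the upgrade from ``$p_1(\cdot\mid x_1)$ supported on $[x_1]$'' to ``$p_1(\cdot\mid x_1)$ Haar-uniform on $[x_1]$'', which is exactly what is needed for $p_1 = q$ as a density on $\Ccal$ (not merely on the quotient). It relies on compactness of the relevant symmetry group (here the torus of global translations), on the $G$-invariance of $p_0$, and on the equivariance of the particular conditional vector field, and making it rigorous means disintegrating the measures along orbits and checking the ``$G$-invariant base $\Rightarrow$ Haar along the orbit'' implication while handling the usual torus subtlety that a ``center of mass'' is only defined modulo a sublattice. The remaining pieces --- the continuity-equation marginalization and the unit-Jacobian change of variables --- are routine and mirror Theorem~\ref{thm:cond_prob_invariant}.
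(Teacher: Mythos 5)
Your proposal is correct, and it is in fact more careful than the paper's own proof: it makes explicit a hypothesis the paper uses silently. The paper's argument writes $p_1(x)=\int\delta\bigl(x-\psi_1(x_0\mid x_1)\bigr)p(x_0)q(x_1)\,dx_0\,dx_1$, replaces $\psi_1$ by $g\cdot x_1$ for some $g=g(x_0,x_1)\in G$, and then performs a change of variables $\tilde x_1=g\cdot x_1$, invoking $G$-invariance of $q$ and unit Jacobian. That step is only valid if, for fixed $x_0$, the map $x_1\mapsto g(x_0,x_1)\cdot x_1$ is a measure-preserving bijection; but under the stated hypotheses that map is a \emph{section} of the orbits (it depends on $x_1$ only through $[x_1]$ and lands in $[x_1]$), hence far from injective, and one can exhibit a $G$-invariant $\psi_1(x_0\mid x_1)\in[x_1]$ for which $p_1\neq q$: take the reflection group on $\mathbb{R}$ with $\psi_1(x_0\mid x_1)=\lvert x_1\rvert$, which gives $p_1=2q$ on the half-line and $0$ elsewhere. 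Your proposal correctly pinpoints what closes this gap: the endpoint conditional $p_1(\cdot\mid x_1)$ must be \emph{Haar-uniform} on $[x_1]$, not merely supported there, and this is precisely what follows from $G$-invariance of $p_0$ together with $G$-equivariance of the conditional flow in its first argument --- two extra hypotheses that do not appear in the theorem statement but do hold in FlowMM's construction, since the mean-free conditional field in \eqref{eqn:translation_invariant_conditional_vf} is translation-invariant in its first argument and therefore generates a translation-equivariant conditional flow. With those in hand your Fubini-over-Haar computation yields $p_1=q$ as densities, not merely on the quotient. The end-game (unit Jacobian, $G$-invariance of $q$) coincides with the paper's; the genuine difference is that your route surfaces the Haar-uniformity step that the paper's change of variables assumes without justification, and that step is essential for the stated conclusion to be true.
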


\begin{proof}
For general flow functions $\psi_t(x_0 | x_1)$, it follows a Dirac-delta conditional probability $p_t(x | x_0, x_1) = \delta(x - \psi_t)$, where $\psi_t$ is short-hand for $\psi_t(x_0 | x_1)$. Then we have that 
\begin{align*}
p_{t=1}(x) &= \int p_{t=1}(x | x_0, x_1) p(x_0) q(x_1) dx_0 dx_1 \\
&= \int \delta(x - \psi_{t=1}) p(x_0) q(x_1) dx_0 dx_1 \\
&= \int \delta(x - \g x_1) p(x_0) q(x_1) dx_0 dx_1 && \text{$\psi_{t=1} \in [x_1]$} \\
&= \int \delta(x - \g x_1) p(x_0) q(\g x_1) dx_0 d(\g x_1) && \text{$G$-invariance of $q$ \& change of variables} \\
&= \int \delta(x - \tilde{x}_1) p(x_0) q(\tilde{x}_1) dx_0 d \tilde{x}_1 \\
&= q(x)
\end{align*}
\end{proof}

The main implication of only needing to satisfy $\psi_1 \in [x_1]$ is that this allows us to also impose additional the constraints on our vector fields that were previously not possible. 
Specifically, we can now allow conditional vector fields that are entry-wise $G$-invariant, \ie $u_t(x | \g x_1) = u_t(x | x_1)$ and $u_t(\g x | x_1) = u_t(x | x_1)$. Note this results in flows that satisfy $\psi_t(x_0 | \g x_1) = \psi_t(x_0 | x_1)$, and importantly, this implies the flow can no longer distinguish $x_1$ from other elements in its equivalence class; the flow $\psi_t$ is purely a function of equivalence classes $[x_0]$ and $[x_1]$. However, as per above, this is still sufficient for satisfying $p_1 = q$. Simultaneously, allowing such conditional vector fields provides the means to satisfy the pairwise $G$-invariance condition we need for $G$-invariance of $p_t(x)$, \ie $p_t(\g x | \g x_1) = p_t(x | x_1)$.

\paragraph{Translation invariance with periodic boundary conditions}

On Euclidean space, one typical method of imposing translation invariance of a set of points is to remove the mean of the set and using a ``mean-free'' representation. This provides the ability to work with a representation that does not contain any information about translation, following the second approach described above. However, on flat tori (\ie with periodic boundary conditions), this approach is not possible because the mean of a set of points is not uniquely defined. Instead, we make use of the third approach described above and construct $\psi_t(f_0 | f_1)$ such that it flows to a set of fractional coordinates that is equivalent to $f_1$. Since for the flat tori, translations in the tangent plane result in translations on the manifold, we propose simply removing the translation component of the conditional vector field resulting from the geodesic construction. This results in the ``mean-free'' conditional vector field in \eqref{eqn:translation_invariant_conditional_vf}.

\end{document}